\definecolor{mydarkblue}{rgb}{0,0.08,0.45}
\definecolor{blueberry}{RGB}{4,51,255}
\newcommand\addstarred[1]{%
    \expandafter\let\csname\string#1@nostar\endcsname#1%
    \edef#1{\noexpand\@ifstar\expandafter\noexpand\csname\string#1@star\endcsname\expandafter\noexpand\csname\string#1@nostar\endcsname}%
    \expandafter\newcommand\csname\string#1@star\endcsname%
}
\newcommand{\1}[1]{\mathbbm{1}{\{#1\}}}
\DeclarePairedDelimiter\ceil{\lceil}{\rceil}
\DeclarePairedDelimiter\floor{\lfloor}{\rfloor}
\DeclarePairedDelimiter\abs{\lvert}{\rvert}
\DeclareMathOperator*{\argmax}{arg\,max}
\renewcommand{\ge}{\geqslant}
\renewcommand{\le}{\leqslant}
\newcommand{\type}[1]{Type-\uppercase\expandafter{\romannumeral#1}}
\newtheorem{theorem}{Theorem}
\newtheorem{lemma}[theorem]{Lemma}
\newtheorem{proposition}[theorem]{Proposition}
\newtheorem{remark}[theorem]{Remark}
\let\oldComment=\Comment
\renewcommand{\Comment}[1]{\oldComment{\texttt{#1}}}
\algnewcommand{\LeftComment}[1]{\Statex $\triangleright$ \texttt{#1}}
\algnewcommand{\RightComment}[1]{\Statex \leavevmode\hfill$\triangleright$ \texttt{#1}}
\algnewcommand\algorithmicinput{\textbf{Input:}}
\algnewcommand\Input{\item[\algorithmicinput]}%
\algnewcommand\algorithmicoutput{\textbf{Output:}}
\algnewcommand\Output{\item[\algorithmicoutput]}%
\algnewcommand\algorithmicinitial{\textbf{Initialize:}}
\algnewcommand\Initial{\item[\algorithmicinitial]}%
\newcommand{\sewr}{{\normalfont\texttt{SE-WR}}\xspace}
\newcommand{\sewrst}{{\normalfont\texttt{SE-WR-Stop}}\xspace}
\newcommand{\pewr}{{\normalfont\texttt{PE-WR}}\xspace}
\newcommand{\mssewr}{{\normalfont\texttt{MS-SE-WR}}\xspace}
\newcommand{\mab}{{\normalfont\texttt{MAB}}\xspace}
\newcommand{\CInput}{C_{\text{Input}}}
\title{Stochastic Bandits Robust to Adversarial Attacks}
\author{%
  Xuchuang Wang \\
  CICS, UMass Amherst\\
    Amherst, MA 01003 \\
  \texttt{xuchuangwang@cs.umass.edu} \\
  \And
  Jinhang Zuo \\
  CS, CityU \\
  Kowloon, Hong Kong\\
  \texttt{jinhangzuo@gmail.com} \\
  \And
  Xutong Liu \\
  CSE, CUHK \\
  New Territories, Hong Kong \\
  \texttt{liuxt@cse.cuhk.edu.hk} \\
  \AND
  John C.S. Lui \\
  CSE, CUHK \\
  New Territories, Hong Kong \\
  \texttt{cslui@cse.cuhk.edu.hk} \\
  \And
    Mohammad Hajiesmaili\\
    CICS, UMass Amherst\\
    Amherst, MA 01003 \\
  \texttt{hajiesmaili@cs.umass.edu} \\
}
\begin{document}

\maketitle

\begin{abstract}
    This paper investigates stochastic multi-armed bandit algorithms that are robust to adversarial attacks, where an attacker can first observe the learner's action and \emph{then} alter their reward observation.
    We study two cases of this model, with or without the knowledge of an attack budget $C$, defined as an upper bound of the summation of the difference between the actual and altered rewards. For both cases, we devise two types of algorithms with regret bounds having additive or multiplicative $C$ dependence terms.
    For the known attack budget case, we prove our algorithms achieve the regret bound of ${O}((K/\Delta)\log T + KC)$ and $\tilde{O}(\sqrt{KTC})$ for the additive and multiplicative $C$ terms, respectively, where $K$ is the number of arms, $T$ is the time horizon, $\Delta$ is the gap between the expected rewards of the optimal arm and the second-best arm, and \(\tilde{O}\) hides the logarithmic factors.
    For the unknown case, we prove our algorithms achieve the regret bound of $\tilde{O}(\sqrt{KT} + KC^2)$ and $\tilde{O}(KC\sqrt{T})$ for the additive and multiplicative $C$ terms, respectively.
    In addition to these upper bound results, we provide several lower bounds showing the tightness of our bounds and the optimality of our algorithms.
    These results delineate an intrinsic separation between the bandits with attacks and corruption models~\citep{lykouris2018stochastic}.
\end{abstract}


\section{Introduction}

Online learning literature~\citep[\S7.1.2]{borodin2005online}
usually considers two types of non-oblivious adversary models: the medium adversary and the strong adversary.\footnote{There is a third adversary model, called the weak adversary, which is oblivious. The medium adversary is also called the adaptive-online adversary, while the strong adversary is called the adaptive-offline adversary.} The medium adversary chooses the next instance \emph{before} observing the learner's actions, while the strong adversary chooses instances \emph{after} observing the learner's actions.
When it comes to the multi-armed bandits (\mab) learning with an adversary, the medium adversary corresponds to the bandits with corruption~\citep{lykouris2018stochastic}, and the strong adversary corresponds to adversarial attacks on bandits~\citep{jun2018adversarial}. Bandit algorithms robust to corruption are developed for the medium adversary models in bandits literature, e.g.,~\citet{auer2002nonstochastic,audibert2010regret,lykouris2018stochastic,gupta2019better}. However, for the strong adversary model, i.e., the adversarial attack model on \mab, most previous efforts focus on devising attacking policies to mislead the learner to pull a suboptimal arm and thus suffer a linear regret, e.g., \citet{jun2018adversarial,liu2019data,zuo2024near}. Developing robust algorithms for the adversarial attack model and achieving regret with benign dependence on the total attack is still largely unexplored (for a detailed discussion, see the end of \S\ref{sec:related-works}).

\textbf{Stochastic \mab with adversarial attacks}
In this paper, we study the stochastic \mab under adversarial attacks and develop robust algorithms whose regret bounds degrade gracefully in the presence of such attacks.
Denote by \(K\in\mathbb{N}^+\) as the number of arms in \mab, and each arm \(k\) is associated with a reward random variable \(X_k\) with an unknown mean \(\mu_k\). Denote by \(k^*\) as the arm index with the highest mean reward, and \(\Delta_k \coloneqq \mu_{k^*} - \mu_k\) the reward gap between the optimal arm \(k^*\) and any suboptimal arm \(k\). The learner aims to minimize the \emph{regret}, defined as the difference between the highest total rewards of pulling a single arm and the accumulated rewards of the concern algorithm. In each time slot, the learner first pulls an arm. Then, the adversary observes the pulled arm and its realized reward and chooses an attacked reward for the learner to observe. Denote by \(C\) as the total amount of \emph{attacks} that the adversary uses to alter the raw rewards to the attacked rewards over all rounds.
We present the formal model definitions in \S\ref{sec:model}.

\textbf{Contributions}
This paper proposes robust algorithms with tight regret bounds for the adversarial attack model on \mab. To achieve that, we first develop robust algorithms with the known attack budget as an intermediate step towards developing robust algorithms for the unknown attack budget case. Later on, we call the former the \emph{known attack} case and the latter the \emph{unknown attack} case.

\noindent\textit{Known attack budget case.} For the known attack case (\S\ref{sec:algorithm-known}), we first investigate the successive elimination with wider confidence radius (\sewr) algorithm, which was initially proposed for bandits with corruptions~\citep{lykouris2018stochastic}.
In~\S\ref{subsec:algorithm-known-additive}, we show \sewr can be applied to the bandits with known adversarial attack budget
and prove that this algorithm enjoys a \textit{tighter}
regret upper bound \(O( \sum_{k\neq k^*} {\log T}/{\Delta_k} + KC )\) (under both attack and corruption models) than previous analysis~\citep{lykouris2018stochastic} under the corruption model.
This improvement removes the original additive term \(\sum_{k\neq k^*} {C}/{\Delta_k}\)'s dependence of \(\Delta_k\) and reduces it to \(KC\).
To achieve gap-independent regret bounds in~\S\ref{subsec:algorithm-known-multiplicative}, we propose two stopping conditions for the successive elimination to modify the \sewr algorithm to two \sewrst algorithms (see \textcircled{\small{1}} in Figure~\ref{fig:algorithm-design}).
We prove that the regrets of these two modified algorithms are \(O(\sqrt{KT\log T} + KC)\) and \(O(\sqrt{KT(\log T + C)})\) respectively.

\noindent\textit{Unknown attack budget case.} To address with \mab with unknown attack case (\S\ref{sec:algorithm-unknown}), we use algorithms proposed for known attacks as building blocks to devise algorithms for unknown attacks. We consider two types of algorithmic techniques.
In~\S\ref{subsec:algorithm-unknown-additive}, we apply a multi-phase idea to extend the \sewr algorithm to a phase elimination algorithm (\pewr, see \textcircled{\small{2}} in Figure~\ref{fig:algorithm-design}). The \pewr algorithm utilizes a multi-phase structure---each phase with carefully chosen attack budgets and phase lengths---to defend against unknown attacks. We show that the regret of \pewr is upper bounded by \(O(\sqrt{KT} + KC^2)\). In~\S\ref{subsec:algorithm-unknown-multiplicative}, we apply a model selection technique, which treats the unknown attack \(C\) as a parameter in model selection (see \textcircled{\small{3}} in Figure~\ref{fig:algorithm-design}). Specifically, we consider \(\log_2 T\) base instances of \sewrst with different input attack budgets and then use the model selection technique to ``corral'' these base instances, composing the model-selection \sewrst (called \mssewr) algorithm. We upper bound the regret of \mssewr by \(\tilde{O}(\sqrt{KC}T^{\frac 2 3})\) or \(\tilde{O}(KC\sqrt{T})\) depending on the choice of the model selection algorithm.
Figure~\ref{fig:algorithm-design} summarizes the relations of algorithm design, and Figure~\ref{fig:unknown-C-comparison} shows that which type of algorithm performs better, those with additive bounds or those with multiplicative bounds, varies depending on the total attacks.


\begin{figure}[tb]
    \centering
    \begin{minipage}{.58\textwidth}
        \centering
        \includegraphics[width=\linewidth]{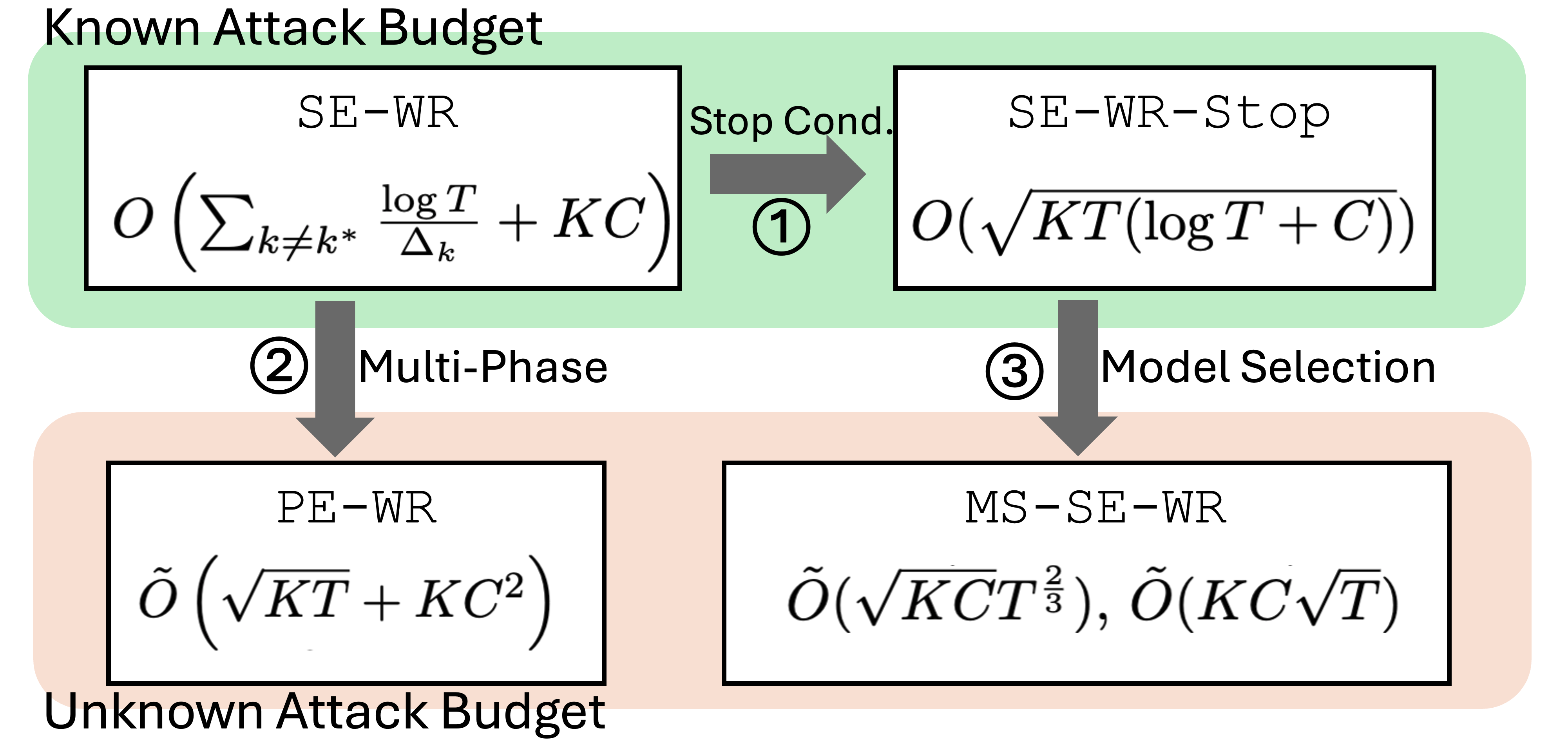}
        \caption{Algorithm design overview: only \sewr has a gap-dependent bound; others are all gap-independent.}
        \label{fig:algorithm-design}
    \end{minipage}%
    \hfill
    \begin{minipage}{.375\textwidth}
        \centering
        \includegraphics[width=\linewidth]{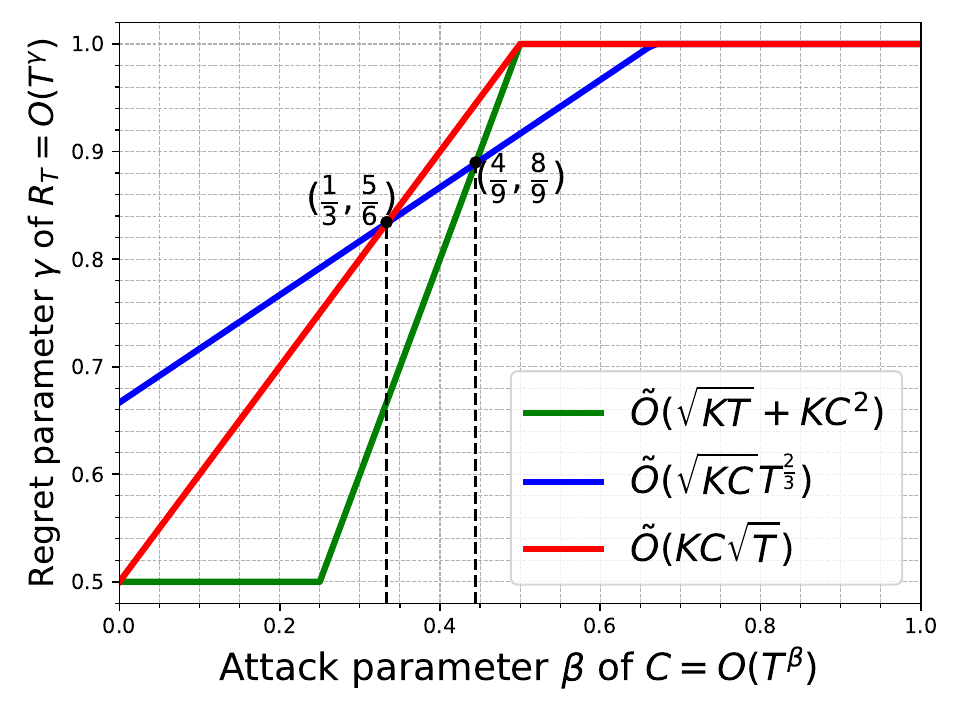}
        \caption{Comparison of unknown \(C\) regrets 
        (see Remark~\ref{remark:regret-bounds-comparison} for detail)
        }
        \label{fig:unknown-C-comparison}
    \end{minipage}%
\end{figure}

\noindent\textit{Lower bounds.} In \S\ref{sec:lower-bound}, we also provide lower bound results to show the tightness of our upper bounds. We first show a general \(\Omega(KC)\) lower bound for any algorithm under attack. Based on this, we propose the gap-dependent lower bound \(\Omega( \sum_{k\neq k^*} {\log T}/{\Delta_k} + KC )\) (informal expression) and gap-independent lower bound \(\Omega(\sqrt{KT} + KC)\). Our refined upper bound of \sewr matches the gap-dependent lower bound up to some prefactors, and one of our proposed \sewrst algorithms matches the gap-independent lower bound up to some logarithmic factors.
To show the tightness of our upper bounds for the unknown attack algorithms, we derive lower bounds, \(\Omega(T^\alpha + C^{\frac{1}{\alpha}})\) and \(\Omega(C^{\frac 1 \alpha - 1}T^\alpha)\) with \(\alpha \in [\frac 1 2, 1)\), for two classes of algorithms whose regret upper bounds follow certain additive and multiplicative forms respectively.
With \(\alpha = \frac{1}{2}\), these two lower bounds, becoming \(\Omega(\sqrt{T} + C^2)\) and \(\Omega(C \sqrt{T})\), match the upper bounds of \pewr and \mssewr in terms of \(T\) and \(C\).
We summarize our analysis results in Table~\ref{tab:overview}.

\begin{table}[tp]
    \centering
    \caption{
        Results overview: upper bounds with \(^\dagger\) are for pseudo regret in expectation, while all other upper bounds are for realized regret with high probability;
        lower bounds with \(^\ddagger\) are only for special classes of algorithms (see Propositions~\ref{prop:additive-lower-bound} and~\ref{prop:multiplicative-lower-bound}), and the lower bound with \(^{\dagger\dagger}\) holds when \(T\to \infty\).
        We report both additive and multiplicative bounds because which one is better could depend on the value of \(C\) (see Figure~\ref{fig:unknown-C-comparison} for unknown \(C\) case and the detailed discussion in Remark~\ref{remark:regret-bounds-comparison}).
    }\label{tab:overview}
    \resizebox{\columnwidth}{!}{
        \begin{tabular}{cccc}
            \toprule
            \textbf{Regret Bounds}
             & \textbf{Known Attack \(C\)} (\S\ref{sec:algorithm-known})
             & \textbf{Unknown Attack \(C\)} (\S\ref{sec:algorithm-unknown})
            \\
            \midrule
            \textbf{Additive \(C\)}
             & \(O\left( \sum_{k\neq k^*} \frac{\log T}{\Delta_k} + KC \right),\,O(\sqrt{KT\log T} + KC)\)
             & \(\tilde{O}\left( \sqrt{KT} + KC^2 \right)\)
            \\
            \textbf{Multiplicative \(C\)}
             & \(O(\sqrt{KT(\log T + C)})\)
             & \(\tilde{O}(\sqrt{KC}T^{\frac 2 3})^\dagger,\,\tilde{O}(KC\sqrt{T})^\dagger\)
            \\
            \textbf{Lower Bounds} (\S\ref{sec:lower-bound})
             & \(\Omega\left( \sum_{k\neq k^*} \frac{\log T}{\Delta_k} + KC \right)^{\dagger\dagger},\Omega(\sqrt{KT} + KC)\)
             & \(\Omega(T^\alpha + C^{\frac{1}{\alpha}})^\ddagger, \Omega(C^{\frac 1 \alpha - 1}T^\alpha)^\ddagger\)
            \\
            \bottomrule
        \end{tabular}}
\end{table}





\noindent\textit{Result separation between corruption and attack.}
With the results in this paper, we demonstrate a clear \textit{separation} between corruption (medium adversary) and attack (strong adversary) models in terms of the additive regret bounds.
Denote by \(C'\) the total corruption to distinguish from the total attack \(C\), and we discuss the model different in more detail in \S\ref{subsec:model-difference}.
In the case of known corruption/attack budget, while both achieving the gap-dependent regret bounds \(\sum_{k\neq k^*} (1/\Delta_k) \log T\) in \(T\)-dependent terms, we show that the regret due to attack is \(\Theta(KC)\) which has a factor of \(K\) worse than the additional \(\Theta(C')\) regret due to corruption.
That implies, with the same amount of budget, an attacker can produce \(K\) times larger impact on regret than that under the corruption model.
For unknown budget cases,
while bandits with corruptions have the regret bounds of \(O\left( \sum_{k\neq k^*} \frac{\log^2(KT/\delta)}{\Delta_k} + C'\right)\)~\citep{gupta2019better,liu2021cooperative}, it is impossible for attack case to achieve a regret upper bound in the form of \(O\left( \text{ploylog}(T) + C^\alpha\right)\) for any \(\alpha > 0\) (contradicts with Theorem~\ref{thm:linear-regret-condition}). Instead, we show that the regret with unknown attacks can attain \(\tilde{O}(\sqrt{KT} + KC^2)\) where the \(C^2\) additive term is tight. Even if ignoring the worse order of the first \(T\)-related terms, the second regret term due to attack \(O(C^2)\) is still a factor of \(C\) worse than the additional \(O(C')\) regret due to corruption. That is, an attack with \(O(\sqrt{T})\) budget is enough to make the algorithm suffer a linear regret in the attack model,\footnote{This statement is for the algorithm with \(\tilde{O}(\sqrt{KT} + KC^2)\) mentioned above. Our lower bound results do not exclude the possible existence of other algorithms with upper bounds \(\tilde{O}(T^\alpha + C^{\frac{1}{\alpha}})\) for \(\alpha \in [\frac 1 2, 1)\), in which case a sublinear \(O(T^\alpha)\) budget can make the algorithm suffer linear regret.} while one needs a linear \(O(T)\) corruption budget to achieve the same effect in the corruption case.




\textbf{Extended literature review}\label{sec:related-works}
\noindent\textit{Robustness against corruption.}
\citet{lykouris2018stochastic} propose the bandits with corruption model and devised algorithms with \(O((K\log T + KC')/\Delta)\) regret for known corruption \(C'\) case and algorithms with \({O}(C'K^2\log T/ \Delta)\) regret for unknown \(C'\) case, where \(\Delta\) denotes the smallest non-zero reward gap.
Later, \citet{gupta2019better} study the same model and propose an algorithm that improves the multiplicative \(C'\) dependence of regret for unknown \(C'\) case to additive \(C'\), i.e., \({O}(K\log^2 T/\Delta + KC')\).
However, the robust design of both algorithms above relies on randomly pulling arms, which is not feasible in our attack setting.
Besides bandits robust to corruption, there is another line of works on best-of-both-world bandit algorithm design, e.g.,~\citet{seldin2017improved,wei2018more,zimmert2021tsallis}, where their algorithms can be applied to the bandit with corruption but do not work in the adversarial attack model either.

\noindent\textit{Attacks on bandits.}
Attack policy design for \mab algorithms has been studied by~\citet{jun2018adversarial,liu2019data,xu2021observation,zuo2024adversarial,zuo2024near} and many others.
They aim to design attacking policies to mislead the learner, typically with logarithmic regrets when there is no attack, to pull a suboptimal arm and thus suffer a linear regret. This is the opposite of our objective for designing robust algorithms to achieve sublinear regret under any attacks.

\noindent\textit{Robustness against attacks.}
The only existing result on robust algorithms for \mab under attacks is by~\citet[Section 6]{zuo2024near}, where the author uses competitive ratio as the objective---often used when one cannot achieve sublinear regret, instead of the finer-grained regret metric studied in this paper.
Apart from the \mab model, there are works studying robust algorithms on structured bandits under attacks, e.g., linear bandits~\citep{bogunovic2020corruption,he2022nearly} and Lipchitz bandits~\citep{kang2024robust}. Although their linear/Lipchitz bandits results can be reduced to \mab, this reduction does not provide algorithms with tight regrets as in Table~\ref{tab:overview} of this paper. We will elaborate on this in \S\ref{sec:algorithm-unknown}.



\section{Model: \mab with Adversarial Attacks}\label{sec:model}

We consider a \mab model with \(K\in\mathbb{N}^+\) arms. Each arm \(k \in \mathcal{K}\coloneqq \{1,2,\dots, K\}\) is associated with a reward random variable \(X_k\in[0,1]\) with an unknown mean \(\mu_k\). Denote the unique arm with highest reward mean as \(k^*\), i.e., \(k^* = \argmax_{k\in\mathcal{K}} \mu_k\).
We consider \(T\in\mathbb{N}^+\) decision rounds.
At each round \(t\in\mathcal{T}\coloneqq\{1,2,\dots, T\}\), the learner selects an arm \(I_t\in\mathcal{K}\), and the arm generates a reward realization \(X_{I_t, t}\) (not disclosed to the learner yet). The adversary observes the pulled arm \(I_t\) and its realized reward \(X_{I_t, t}\) and then chooses an attacked reward \(\tilde{X}_{I_t, t}\) for the learner to observe.
The total attack is the sum of the absolute differences between the raw rewards and the attacked rewards over all rounds, i.e., \(C \coloneqq \sum_{t=1}^T \abs{X_{I_t, t} - \tilde{X}_{I_t, t}}\).
We summarize the decision process in Procedure~\ref{proc:decision-under-attack}.


\floatname{algorithm}{Procedure}
\begin{algorithm}[tb]
    \caption{Decision under Attacks}\label{proc:decision-under-attack}
    \begin{algorithmic}[1]
        \For {each round \(t=1,2,\dots,T\)}
        \State The learner pulls an arm $I_t$
        \State A stochastic reward \(X_{I_t, t}\) is drawn from this arm \(I_t\)
        \State The adversary observes the pulled arm $I_t$ and its realized reward $X_{I_t, t}$
        \State The adversary chooses an attacked reward $\tilde{X}_{I_t, t}$
        \State The learner only observes the attacked reward $\tilde{X}_{I_t, t}$, and not $X_{I_t, t}$
        \EndFor
    \end{algorithmic}
\end{algorithm}
\floatname{algorithm}{Algorithm}

\textbf{Regret objective}
The adversary aims to maximize the learner's regret, while the learner aims to minimize the regret.
We define the (realized) regret as the difference between the highest total realized \emph{raw} rewards of a single arm and the accumulated \emph{raw} rewards of the learning algorithm as follows,
\begin{equation}
    \label{eq:realized-regret}
    R_T \coloneqq \max_{k\in\mathcal{K}} \sum\nolimits_{t\in\mathcal{T}} ({X}_{k,t} -  {X}_{I_t, t}).
\end{equation}
Besides the realized regret in~\eqref{eq:realized-regret}, another common regret definition is the pseudo-regret, defined as
\begin{equation}
    \label{eq:pseudo-regret}
    \bar{R}_T \coloneqq \max_{k\in\mathcal{K}} \mathbb{E}\left[
        \sum\nolimits_{t\in\mathcal{T}} ({X}_{k,t} - {X}_{I_t, t})
        \right]
    = \mathbb{E}\left[ T \mu_{k^*} - \sum\nolimits_{t\in\mathcal{T}}\mu_{I_t} \right].
\end{equation}
By Jensen's inequality, we have \(
\bar{R}_T \le \mathbb{E}[R_T],
\)
implying that the pseudo-regret is a weaker metric than the realized regret. This paper's results are primarily on realized regret, but we also provide results on the pseudo-regret when necessary.


\textbf{Comparison between corruption and attack models}\label{subsec:model-difference}
Different from the above attack model (strong adversary), there is prior literature on robust bandit algorithms against corruption (medium adversary)~\citep{lykouris2018stochastic,gupta2019better},
where the adversary chooses the corruption rewards \emph{before} observing the learner's actions (see Appendix~\ref{apx:corruption} for more details), and the total amount of corruption is defined as \({
        C' \coloneqq \sum_{t=1}^T \max_k \abs{X_t(k) - \tilde{X}_{k,t}}.}\)
While comparing the definition of corruption \(C'\) with attack \(C\) may lead to the impression that \(C \le C'\) and both models only differ slightly, we emphasize that the two models are fundamentally different due to the sequence of the adversary attacking (corrupting) rewards after (before) observing the learner's actions.

To highlight the crucial difference between the two sequences, we illustrate that, to generate the same degree of impact on a bandit algorithm, the required budget under the corruption model can be much larger than that under the attack model.
For example, one naive attack policy, forcing the learner to suffer linear regrets, could be that whenever the algorithm pulls the optimal arm \(k^*\), the adversary alters the reward realization to be smaller than the mean of the best suboptimal arm. As the adversary observes the learner's action before choosing the attack, this attack policy may cost \(O(\log T)\) or sublinear \(O(T^\alpha)\) budget for some \(\alpha < 1\), depending on the specific algorithm.
However, in the corruption model, as the adversary does not know which arm is pulled before corrupting the rewards, the adversary has to corrupt the optimal arm's reward realization in all \(T\) rounds to achieve the same effect, resulting in \(O(T)\) cost.
From an abstract perspective, the attack model is more powerful than the corruption model because the sequence of observing the pulled arm before attacking makes the randomized strategy (widely used in bandits) invalid, for example, randomly pulling arms in EXP3~\citep{auer2002nonstochastic} and BARBAR~\citep{gupta2019better}. Without randomization, devising robust algorithms for attacks is much more challenging than that for corruptions.

\section{Lower Bounds}\label{sec:lower-bound}

This section first presents a new general lower bound for the adversarial attack model on stochastic multi-armed bandits (Theorem~\ref{thm:KC-lower-bound}) and two of its variants. Later, we derive two lower bounds for two special classes of bandit algorithms (Propositions~\ref{prop:additive-lower-bound} and~\ref{prop:multiplicative-lower-bound}).
All proofs are deferred to Appendix~\ref{sec:proof_for_lower_bound}.

\subsection{A General Lower Bound}

We first present a general lower bound \(\Omega(KC)\) in Theorem~\ref{thm:KC-lower-bound}. Together with known lower bounds of \mab, we derive Proposition~\ref{prop:combined-lower-bound}.
\S\ref{sec:algorithm-known} presents algorithms matching these lower bounds.

\begin{restatable}{theorem}{KClowerbound}
\label{thm:KC-lower-bound}
    Given a stochastic multi-armed bandit game with \(K\) arms, under attack with budget \(C\), and \(T>KC\) decision rounds,\footnote{Note that \(T\le KC\) implies that \(C=\Omega(T)\) which trivially results in a linear regret.} for any bandits algorithm, there exists an attack policy with budget \(C\) that can make the algorithm suffer \(\Omega(KC)\) regret.
\end{restatable}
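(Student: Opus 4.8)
The plan is to construct, for each candidate algorithm, a pair of bandit instances that the algorithm cannot distinguish under a suitable attack, and to exploit the $K$ arms to make the cost of ``hiding'' the instance scale like $KC$ rather than $C$. The intuition from the introduction's naive-attack discussion is the right one: an attacker who observes the pulled arm can spend budget only on the pulls that matter, and with $K$ arms there are $K$ near-tied candidates for the best arm, so the attacker can afford to make the algorithm pull a genuinely suboptimal arm about $C$ times on each of $K-1$ arms before the budget runs out. First I would fix a small gap parameter; a natural choice is to take all arms to have mean close to $1/2$, with one designated arm $k^\star$ slightly better, and to calibrate the per-pull attack magnitude against $C$ and the number of arms so that the total attack used is at most $C$.

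The core of the argument is a simulation / indistinguishability step. I would consider the ``all arms equal'' base instance $\nu_0$ where every arm has mean $1/2$, and for each arm $j$ a perturbed instance $\nu_j$ where arm $j$ is the unique optimum with gap $\Delta$ (to be chosen). On $\nu_j$ the attacker shaves a small amount off arm $j$'s realized reward every time it is pulled — just enough to push its conditional reward distribution back to that of $\nu_0$ — and leaves all other arms untouched. This attack is feasible precisely while the cumulative shaved amount stays below $C$, i.e. for roughly the first $C/\Delta$ pulls of arm $j$ (using that a unit of gap costs $\Theta(\Delta)$ of budget per pull in expectation, controlled via a Hoeffding/Chernoff bound so the realized budget is $\le C$ with high probability). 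Under this attack the learner's \emph{observations} on $\nu_j$ are distributed exactly as they would be on $\nu_0$ up to that stopping time. Hence, summing over $j$, on the base instance $\nu_0$ the algorithm must, in expectation, pull at least one suboptimal-for-$\nu_j$ arm fewer than $C/\Delta$ times for some $j$ — a pigeonhole over the $K$ arms and the horizon — and for that $j$ the attack on $\nu_j$ is never exhausted, so on $\nu_j$ the algorithm pulls the true optimum $o(T)$... wait, more carefully: on $\nu_j$ the algorithm behaves identically (in distribution) to its behavior on $\nu_0$ up to the budget-exhaustion time, and the budget is exhausted only after $\approx C/\Delta$ pulls of arm $j$; choosing $\Delta$ so that $C/\Delta$ is a constant fraction of $T/K$ forces the algorithm to incur regret $\Delta$ on each of the $\gtrsim T - C/\Delta$ non-$j$ pulls, giving regret $\Omega(\Delta \cdot T) = \Omega(KC)$ when $\Delta \asymp KC/T$, subject to $\Delta \le \tfrac12$, which holds since $T > KC$.

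Concretely I would carry out the steps in this order: (i) define $\nu_0,\dots,\nu_K$ and the per-pull attack, with $\Delta := c\,KC/T$ for a small absolute constant $c$; (ii) define the budget-exhaustion stopping time $\tau_j$ for instance $\nu_j$ and show via a concentration bound that up to $\tau_j$ the realized attack is $\le C$ and that $\tau_j$ occurs only after $\Omega(C/\Delta) = \Omega(T/K)$ pulls of arm $j$; (iii) the change-of-measure / simulation lemma: the law of $(I_1,\tilde X_{I_1,1},\dots)$ up to $\tau_j$ is identical under $(\nu_0,\text{no attack})$ and under $(\nu_j,\text{attack})$; (iv) a pigeonhole: since $\sum_{j} \E_{\nu_0}[N_j(T)] = T$, there is an arm $j_0 \neq $ (optimum-of-$\nu_0$... but $\nu_0$ has no strict optimum, so phrase it as: some arm $j_0$ with $\E_{\nu_0}[N_{j_0}(T)] \le T/K < \Omega(T/K)$ threshold, hence $\tau_{j_0} > T$ with constant probability), so on $\nu_{j_0}$ the attack is active for the whole horizon and the algorithm's play matches its $\nu_0$-play, pulling the true best arm $j_0$ at most $T/K$ times and thus suffering $\Omega(\Delta(T - T/K)) = \Omega(KC)$ regret; (v) collect constants and verify $\Delta \le 1/2$ from $T > KC$.

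The main obstacle I anticipate is step (ii)–(iii): making the ``shave arm $j$ back to mean $1/2$'' attack both \emph{budget-feasible with probability one on the event we condition on} and \emph{exactly measure-matching}. Shaving a $[0,1]$-reward down may clip at $0$, which breaks exact distributional matching near the boundary; the fix is to keep all means bounded away from $0$ and $1$ (e.g. supported in $[1/4,3/4]$) so the required shift $\Delta$ never needs clipping, and to use a coupling that maps the realized $X_{j,t}\sim \nu_j$ to $X_{j,t}-\Delta \sim \nu_0$ pointwise. Getting the high-probability budget accounting clean — i.e. that $\Omega(C/\Delta)$ pulls can be attacked before the \emph{random} cumulative cost exceeds $C$ — requires either making the attack deterministic per pull (shift by exactly $\Delta$, so after $m$ pulls the cost is exactly $m\Delta$, trivially $\le C$ iff $m \le C/\Delta$) which sidesteps concentration entirely, or absorbing a constant-factor slack. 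I would take the deterministic-shift route, which makes (ii) exact and reduces the whole proof to the simulation lemma plus pigeonhole.
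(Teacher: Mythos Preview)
Your plan is correct and shares the paper's core idea --- make the $K$ candidate instances observationally indistinguishable by having the attacker mask the optimal arm, then pigeonhole to find an arm the algorithm under-pulls and declare that arm optimal --- but the paper's execution is considerably simpler than yours. The paper takes \emph{deterministic} rewards (arm $k$ has reward $\epsilon$ in instance $\mathcal{I}_k$, all others $0$, no noise), so the attack ``set the observed reward to $0$'' makes every observation identically $0$ and there is no coupling, clipping, or concentration issue to manage at all. It also does not tune the gap to $T$: $\epsilon$ is a free constant, and the argument only examines the first $\lfloor C/\epsilon\rfloor\lfloor K/2\rfloor \le T$ rounds, within which at least $\lfloor K/2\rfloor$ arms are pulled at most $\lfloor C/\epsilon\rfloor$ times by pigeonhole; picking any such arm as the optimum yields regret $\ge \epsilon\cdot\lfloor C/\epsilon\rfloor(\lfloor K/2\rfloor-1) = \Omega(KC)$ directly in those rounds. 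Your route --- scaling $\Delta\asymp KC/T$, using the whole horizon, and applying Markov to pass from $\E_{\nu_0}[N_{j_0}(T)]\le T/K$ to a constant-probability event on which $\tau_{j_0}>T$ --- also works, but carries more moving parts (and your step (iv) should bound $N_{j_0}(T)$ by $C/\Delta = T/(cK)$, not $T/K$, on that event). The only thing your version buys is that it treats genuinely stochastic reward distributions and randomized algorithms explicitly; the paper's proof is written as if the algorithm were deterministic, though the same Markov step you use would patch it.
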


\begin{proposition}
\label{prop:combined-lower-bound}
    For gap-dependent regret bounds and any consistent bandit algorithm\footnote{When \(C=0\), a consistent bandits algorithm has regret \(\bar R_T \le O(T^\alpha)\) for any \(\alpha \in (0,1)\).}, the lower bound is roughly \(\Omega\left( \sum_{k\neq k^*} \frac{\log T}{\Delta_k} + KC \right)\), or formally, for some universal constant \(\xi>0\),
    \begin{equation}
        \label{eq:gap-dependent-lower-bound}
        \liminf_{T\to\infty} \frac{\bar R_T - \xi KC}{\log T} \ge \sum_{k\neq k^*} \frac{1}{2\Delta_k}.
    \end{equation}
    For gap-independent cases, the lower bound is \begin{equation}
        \label{eq:gap-independent-lower-bound}
        \bar R_T \ge \Omega(\sqrt{KT} + KC).
    \end{equation}
\end{proposition}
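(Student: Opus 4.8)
The plan is to derive Proposition~\ref{prop:combined-lower-bound} by combining the general attack lower bound of Theorem~\ref{thm:KC-lower-bound} with the two classical stochastic \mab lower bounds: the asymptotic Lai--Robbins bound for the gap-dependent case, and the minimax $\Omega(\sqrt{KT})$ bound for the gap-independent case. The single structural observation that makes this legitimate is that \emph{any lower bound valid in the attack-free stochastic bandit model is automatically a lower bound in the attack model}: the adversary is always free to never attack (spending none of its budget $C$), and an algorithm that is consistent against every attack policy is in particular consistent against the trivial no-attack policy. Hence in each regime it suffices to exhibit two hard configurations on a common instance class---one forcing the classical $T$-dependent term, one (from Theorem~\ref{thm:KC-lower-bound}) forcing $\Omega(KC)$---and take their maximum, using $\max(a,b)\ge(a+b)/2$ for $a,b\ge0$ to turn the maximum into the stated additive form.

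For the gap-dependent bound~\eqref{eq:gap-dependent-lower-bound}, I would first note that $C$ is fixed (a constant, not growing with $T$), so $\xi KC/\log T\to0$ and~\eqref{eq:gap-dependent-lower-bound} is equivalent to $\liminf_{T\to\infty}\bar{R}_T/\log T\ge\sum_{k\neq k^*}1/(2\Delta_k)$. I would then prove this by the textbook argument on a Bernoulli instance with means clustered in a small neighbourhood of $1/2$: the Lai--Robbins / Burnetas--Katehakis lower bound for consistent algorithms gives $\liminf_{T\to\infty}\bar{R}_T/\log T\ge\sum_{k\neq k^*}\Delta_k/\kl(\mu_k,\mu_{k^*})$, and the standard divergence-to-gap conversion (on such instances $\kl(\mu_k,\mu_{k^*})\le(2+o(1))\Delta_k^2$ as $\Delta_k\to0$) yields $\Delta_k/\kl(\mu_k,\mu_{k^*})\ge1/(2\Delta_k)$. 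The informal ``$+KC$'' term is then supplied separately by Theorem~\ref{thm:KC-lower-bound}, which on the same instance class produces an attack forcing $\bar{R}_T\ge\xi KC$; taking the maximum of the two configurations gives $\bar{R}_T\ge\Omega\!\left(\sum_{k\neq k^*}\frac{\log T}{\Delta_k}+KC\right)$, matching the stated form up to the absolute constant.

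For the gap-independent bound~\eqref{eq:gap-independent-lower-bound}, I would combine the same two ingredients. First, $\bar{R}_T\ge\Omega(\sqrt{KT})$ is the standard minimax lower bound for stochastic $K$-armed bandits, obtained by a perturbed-instance (needle-in-a-haystack) construction over $K$ candidate instances together with a divergence-decomposition/Pinsker argument; by the no-attack observation above it transfers verbatim to the attack model. Second, $\bar{R}_T\ge\Omega(KC)$ is exactly Theorem~\ref{thm:KC-lower-bound} under the stated regime $T>KC$. Taking the maximum of the two hard configurations, $\bar{R}_T\ge\max\{c_1\sqrt{KT},\,c_2KC\}\ge\tfrac12(c_1\sqrt{KT}+c_2KC)=\Omega(\sqrt{KT}+KC)$.

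I expect the ``main obstacle'' here to be bookkeeping rather than genuine difficulty, since the real work is already isolated in Theorem~\ref{thm:KC-lower-bound}. The points needing care are: (i) ensuring the hard instances for the classical bounds lie in the same class (Bernoulli, means in a fixed subinterval of $(0,1)$) that Theorem~\ref{thm:KC-lower-bound}'s construction is compatible with, so the ``$\max$'' is taken over one instance; (ii) being explicit that the asymptotic $\liminf$ statement~\eqref{eq:gap-dependent-lower-bound} cannot by itself detect the $KC$ term, so the additive-looking claim genuinely requires Theorem~\ref{thm:KC-lower-bound} as a distinct ingredient rather than following from Lai--Robbins alone; and (iii) checking the elementary $\kl$-to-$\Delta$ conversion that produces the constant $1/2$. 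None of these steps is deep.
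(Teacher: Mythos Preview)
Your proposal is correct and follows essentially the same approach as the paper: combine the classical stochastic-bandit lower bounds (asymptotic Lai--Robbins for the gap-dependent part, minimax $\Omega(\sqrt{KT})$ for the gap-independent part) with the $\Omega(KC)$ bound of Theorem~\ref{thm:KC-lower-bound}, then pass from a maximum to a sum via $\max(a,b)\ge(a+b)/2$. The only cosmetic difference is how the factor $1/2$ in~\eqref{eq:gap-dependent-lower-bound} appears: the paper cites the textbook bound $\liminf_{T\to\infty}\bar R_T/\log T\ge\sum_{k\neq k^*}1/\Delta_k$ directly and then averages it with the trivial inequality $\liminf_{T\to\infty}(\bar R_T-\xi KC)/\log T\ge 0$ implied by Theorem~\ref{thm:KC-lower-bound}, whereas you obtain $1/(2\Delta_k)$ through a $\kl$-to-gap conversion---both routes are valid.
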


\subsection{Two Lower Bounds for Special Algorithm Classes}

In this subsection, we first recall results from adversarial attacks on bandits literature~\citep{liu2019data,zuo2024near} in Theorem~\ref{thm:linear-regret-condition}. Then, based on these results, we derive two lower bounds for bandit algorithms with additive and multiplicative regret bounds respectively. In \S\ref{sec:algorithm-unknown}, we present algorithms that match these lower bounds.

\begin{theorem}[{Adapted from~\citep[Fact 1]{zuo2024near} and~\citep[Theorem 4.12]{he2022nearly}}]\label{thm:linear-regret-condition}
    If an algorithm achieves regret \(R_T\) when total attack \(C=0\), then there exists an attack policy with \(C=\Theta(R_T)\) that can make the algorithm suffer linear regret.
\end{theorem}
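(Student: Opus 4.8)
The plan is to reduce the attack model to a two-arm instance and exploit the defining feature of the strong adversary---that it observes the pulled arm and its realized reward \emph{before} choosing the attacked reward. First I would fix the benchmark run of the algorithm on a clean stochastic instance with two arms, arm $k^*$ with mean $\mu_{k^*}$ and a suboptimal arm with mean $\mu_{k^*}-\Delta$ for a suitably small $\Delta>0$; by hypothesis the algorithm incurs regret $R_T$ on this instance when $C=0$. Since arm $k^*$ is optimal, $R_T$ small forces the algorithm to pull the suboptimal arm on only $O(R_T/\Delta)$ rounds in expectation, hence to pull $k^*$ on roughly $T-O(R_T/\Delta)$ rounds.

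The attack I would construct is the natural ``pretend $k^*$ is bad'' policy: whenever the learner pulls $k^*$ and observes realized reward $X_{k^*,t}$, the adversary overwrites it with $\tilde X_{k^*,t}$ drawn so that, from the learner's viewpoint, the observations from $k^*$ look exactly like i.i.d.\ samples of mean $\mu_{k^*}-2\Delta$ (below the other arm's mean $\mu_{k^*}-\Delta$); observations from the suboptimal arm are left untouched. Because the adversary acts after seeing $I_t$ and $X_{I_t,t}$, it can do this coupling exactly, round by round, without ever touching the suboptimal arm. The key step is then the indistinguishability argument: the law of the learner's observation sequence under (clean instance + this attack) is identical to its law under a clean instance in which arm $k^*$ truly has mean $\mu_{k^*}-2\Delta$ and is therefore suboptimal. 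Hence the learner's action distribution is unchanged, so it keeps pulling the now-suboptimal $k^*$ on $T-O(R_T/\Delta)$ rounds, each costing $\Delta$ in true regret, yielding $\Omega(\Delta T)$ regret---linear once $\Delta$ is a constant.

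It remains to bound the attack budget. On each round where $k^*$ is pulled, $\abs{X_{k^*,t}-\tilde X_{k^*,t}}$ has expectation $O(\Delta)$ (shifting the mean by $2\Delta$ within $[0,1]$), and such rounds number at most $T$, giving $C=O(\Delta T)=\Theta(R_T)$ after choosing $\Delta \asymp R_T/T$; a matching $C=\Omega(R_T)$ follows because any attack achieving linear-in-$T$ regret deflection on an algorithm with clean regret $R_T$ must on average move the reward on the pulled-often arm by a constant fraction of a gap, and Pinsker/KL-divergence bookkeeping between the attacked and clean observation laws lower-bounds the total perturbation by $\Omega(R_T)$. The main obstacle is making the ``shift the empirical law of $k^*$'' coupling precise when $X_{k^*,t}$ is an arbitrary $[0,1]$ variable rather than, say, Bernoulli---one wants $\tilde X_{k^*,t}$ to be a deterministic-or-randomized function of $X_{k^*,t}$ alone that realizes the target distribution while keeping $\E\abs{X_{k^*,t}-\tilde X_{k^*,t}}$ tightly controlled; this can be handled by the standard quantile (inverse-CDF) transport coupling, but it is the one place where care is genuinely needed, and it is also where the cited references (\citep[Fact~1]{zuo2024near}, \citep[Theorem~4.12]{he2022nearly}) do the technical work we would invoke.
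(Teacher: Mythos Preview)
The paper does not prove this theorem; it is imported wholesale from the cited references and used only as a black box in the proofs of Propositions~\ref{prop:additive-lower-bound} and~\ref{prop:multiplicative-lower-bound}. So there is no paper proof to compare against, but your attempt still deserves scrutiny because it contains a real logical slip that makes the argument internally inconsistent.

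Your indistinguishability step is correct: under the attack, the learner's observation law coincides with that of a clean instance in which $k^*$ has mean $\mu_{k^*}-2\Delta$ and is therefore suboptimal. The error is in the next sentence, where you write that ``the learner's action distribution is unchanged, so it keeps pulling the now-suboptimal $k^*$ on $T-O(R_T/\Delta)$ rounds.'' Indistinguishability tells you the action law matches the \emph{new} clean instance, not the original one. On that new instance $k^*$ is the bad arm, so an algorithm with regret $R_T$ pulls it at most $O(R_T/\Delta)$ times, not $T-O(R_T/\Delta)$ times. You have the direction reversed.

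This reversal is exactly what forces you into the contradictory parameter choice at the end: you need $\Delta$ constant to get $\Omega(\Delta T)$ linear regret, but then your budget bound $C=O(\Delta T)$ is linear, so you shrink $\Delta\asymp R_T/T$---which kills the linear-regret conclusion. Once the direction is fixed, both pieces fall out simultaneously with $\Delta$ a constant: the adversary only needs to attack on the $O(R_T/\Delta)$ rounds where $k^*$ is pulled, each costing $O(\Delta)$, so $C=O(R_T)$; meanwhile the learner spends the remaining $T-O(R_T/\Delta)$ rounds on the truly suboptimal arm, yielding regret $\Omega(\Delta T)=\Omega(T)$. No scaling of $\Delta$, no quantile coupling, and no Pinsker bookkeeping are needed---the ``main obstacle'' you flag largely disappears, since the per-round attack cost already averages to $O(\Delta)$ and the number of attacked rounds is small.
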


\begin{proposition}[Achievable additive regret bounds]\label{prop:additive-lower-bound}
    Given any parameter \(\alpha \in [\frac 1 2, 1)\), for any bandit algorithm that achieves an additive regret upper bound of the form \(O(T^\alpha + C^\beta)\), we have \(\beta \ge \frac{1}{\alpha}\).
\end{proposition}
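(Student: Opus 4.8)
The plan is to turn the assumed additive guarantee against itself using the linear‑regret attack of Theorem~\ref{thm:linear-regret-condition}. Suppose, toward a contradiction, that some algorithm attains \(R_T \le c\,(T^\alpha + C^\beta)\) for every attack budget \(C\ge 0\) and all sufficiently large \(T\), where the constant \(c\) may depend on \(K\) but not on \(T\) or \(C\), and suppose \(\beta < \tfrac1\alpha\), so that \(\alpha\beta<1\). The first step is simply to specialize the bound to the unattacked case \(C=0\): the algorithm's clean regret satisfies \(R_T^{\mathrm{clean}} \le c\,T^\alpha\).

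Next I would feed this clean‑regret guarantee into Theorem~\ref{thm:linear-regret-condition}. It yields an attack policy whose total budget is \(C_0 = \Theta(R_T^{\mathrm{clean}}) = O(T^\alpha)\) and under which the \emph{same} algorithm incurs linear regret, i.e.\ \(R_T \ge \gamma T\) for some constant \(\gamma>0\) and all large \(T\). Finally, applying the assumed upper bound to the budget actually deployed by this attack gives
\[
\gamma T \;\le\; R_T \;\le\; c\,(T^\alpha + C_0^{\beta}) \;=\; O\!\left(T^\alpha + T^{\alpha\beta}\right).
\]
Dividing by \(T\) and letting \(T\to\infty\), the right‑hand side tends to \(0\) because \(\alpha<1\) and \(\alpha\beta<1\), while the left‑hand side stays bounded below by \(\gamma>0\) — a contradiction. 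Hence \(\beta\ge\tfrac1\alpha\). Note the boundary case \(\beta=\tfrac1\alpha\) survives, since then \(\alpha\beta=1\) and the bound \(O(T)\) no longer conflicts with linear regret; this is consistent with the \(\tilde O(\sqrt{KT}+KC^2)\) bound of \pewr{} at \(\alpha=\tfrac12\).

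The step I expect to require the most care is the asymptotic bookkeeping around \(C_0\): one must check that the \(\Theta(\cdot)\) hidden in Theorem~\ref{thm:linear-regret-condition} is compatible with substituting \(C_0\) back into the \(O(T^\alpha + C^\beta)\) bound — in particular that the additive upper bound is genuinely quantified over \emph{all} attack budgets the adversary might use, including the (possibly data‑dependent) budget \(C_0\) — and to observe that a smaller clean regret only shrinks \(C_0\) and hence \(C_0^\beta\), which strengthens rather than weakens the contradiction. Everything else is immediate from \(\alpha<1\), and the same template (plugging the clean‑regret bound into Theorem~\ref{thm:linear-regret-condition}) is exactly what drives the companion lower bound for multiplicative bounds in Proposition~\ref{prop:multiplicative-lower-bound}.
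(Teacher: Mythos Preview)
Your argument is correct and follows essentially the same route as the paper's own proof: assume \(\beta<1/\alpha\), specialize the bound to \(C=0\) to get clean regret \(O(T^\alpha)\), invoke Theorem~\ref{thm:linear-regret-condition} to obtain an attack with budget \(C_0=O(T^\alpha)\) forcing linear regret, and then observe that the assumed bound would cap the regret at \(O(T^\alpha+T^{\alpha\beta})=o(T)\), a contradiction. Your version is simply more explicit about the asymptotic bookkeeping than the paper's two-line sketch.
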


\begin{proposition}[Achievable multiplicative regret bounds]\label{prop:multiplicative-lower-bound}
    Given parameter \(\alpha \in [\frac 1 2, 1)\), for any bandit algorithm that achieves a multiplicative regret bound of the form \(O(T^\alpha C^\beta)\), we have \(\beta \ge \frac{1}{\alpha} - 1\).
\end{proposition}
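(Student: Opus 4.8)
The plan is to pin down \(\beta\) by ``squeezing'' the algorithm's attack-free regret between two estimates, with Theorem~\ref{thm:linear-regret-condition} supplying the lower one. Suppose an algorithm \(\mathcal A\) attains \(R_T \le c\,T^\alpha C^\beta\) on every \(K\)-armed instance and every attack budget \(C\ge 1\), where \(c\) may hide dependence on \(K\) and the instance; the goal is to show \(\beta \ge 1/\alpha - 1\). We may assume \(\beta>0\): a bound with \(\beta\le 0\) is non-increasing in the adversary's budget and therefore amounts to an \(O(T^\alpha)\) bound uniform in \(C\), a conclusion contradicted even more directly by the computation below. Throughout, only the horizon \(T\) is sent to infinity; \(K\) and the instance (hence all hidden constants) are fixed.

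Let \(R_0\) be the worst-case regret of \(\mathcal A\) over \(K\)-armed instances when \(C=0\). On the one hand, an adversary holding budget \(1\) may simply not attack, so by monotonicity of the regret in the budget, \(R_0 \le R_T\big|_{C=1} \le c\,T^\alpha\); on the other hand, \(R_0\) is at least the minimax regret \(\Omega(\sqrt{KT})\), so \(R_0\to\infty\) and in particular \(R_0\ge 1\) for \(T\) large. Next I would apply Theorem~\ref{thm:linear-regret-condition} at \(C=0\) to the worst-case instance: there is an attack spending \(C^\star=\Theta(R_0)\) that drives \(\mathcal A\) to linear regret \(\Omega(T)\). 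Since \(1 \le C^\star = \Theta(R_0) \le \Theta(T^\alpha)=o(T)\), the hypothesized bound applies at \(C=C^\star\), so \(\Omega(T)\le c\,T^\alpha (C^\star)^\beta = \Theta(T^\alpha R_0^\beta)\), i.e.\ \(R_0 = \Omega\big(T^{(1-\alpha)/\beta}\big)\). Matching this lower estimate against the upper estimate \(R_0=O(T^\alpha)\) and letting \(T\to\infty\) forces \((1-\alpha)/\beta \le \alpha\), i.e.\ \(\beta\ge(1-\alpha)/\alpha = 1/\alpha-1\), which is the claim. The twin statement Proposition~\ref{prop:additive-lower-bound} follows from the identical template, except that there the upper estimate \(R_0=O(T^\alpha)\) is read off immediately by setting \(C=0\) in the additive bound \(O(T^\alpha+C^\beta)\), and the comparison yields \(\beta\ge1/\alpha\).

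The step I expect to require the most care is the accounting around Theorem~\ref{thm:linear-regret-condition}: one must verify that the forcing budget \(C^\star=\Theta(R_0)\) genuinely lands in the regime \([1,T]\) on which the multiplicative bound is assumed to hold — this is exactly why both \(R_0\to\infty\) and \(R_0=o(T)\) must be invoked — and that the constants hidden in \(c\), in the \(\Omega(T)\) linear-regret conclusion, and in \(C^\star=\Theta(R_0)\) are treated as fixed while the asymptotics are taken in \(T\). The one modeling point worth making explicit is the reading of ``a bound of the form \(O(T^\alpha C^\beta)\)'': since it is vacuous at \(C=0\), it should be interpreted as holding for all \(C\ge1\) (equivalently, all \(C\) above an absolute constant), which is precisely what licenses the monotonicity step \(R_0\le R_T|_{C=1}\); reading it instead as valid only above a horizon-dependent threshold would make the proposition itself essentially vacuous, so this is the intended reading.
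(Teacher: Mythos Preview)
Your argument is correct and follows essentially the same route as the paper: both invoke Theorem~\ref{thm:linear-regret-condition} to obtain an attack of size \(\Theta(R_0)\) forcing linear regret, then reach a contradiction by plugging that attack into the assumed \(O(T^\alpha C^\beta)\) bound. Your handling of the \(C=0\) degeneracy via the monotonicity step \(R_0\le R_T|_{C=1}\) is a nice touch that the paper's terse ``similar contradiction argument'' leaves implicit.
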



\section{Algorithms with Known Attack Budget}\label{sec:algorithm-known}

In this section, we study the stochastic multi-armed bandit problem with a known attack budget. In \S\ref{subsec:algorithm-known-additive}, we first present an algorithm with an additive gap-dependent upper bound on the attack budget. Then, in \S\ref{subsec:algorithm-known-multiplicative}, we modify this algorithm to two variants with gap-independent upper bounds, one with an additive \(C\) term and another with a multiplicative \(C\) term.

\subsection{\sewr: An Algorithm with Gap-Dependent Upper Bound}\label{subsec:algorithm-known-additive}

Given the knowledge of attack budget \(C\), one is able to predict the worst-case attack and design an algorithm to defend against it.
Here, the robustness is achieved by widening the confidence radius of the reward estimate to account for the \(C\) attack such that the corresponding widened confidence interval contains the true reward mean with a high probability (as if the rewards were not attacked).
Denote \(\tilde{\mu}_k\) as the empirical mean of the attacked reward observations of arm \(k\). The widened confidence interval is centered at \(\tilde{\mu}_k\) and has a radius of \(\sqrt{{\log(2/\delta)}/{N_k}} + {C}/{N_k}\), where \(N_k\) is the number of times arm \(k\) is pulled and \(\delta\) is the confidence parameter.
Then, we utilize the successive elimination (\texttt{SE}) algorithm~\citep{even2006action} to devise a robust bandit algorithm in Algorithm~\ref{alg:se-wcr}.
The algorithm maintains a candidate set of arms \(\mathcal{S}\), initialized as the set of all arms \(\mathcal{K}\),
and successively eliminate suboptimal arms according to the widened confidence intervals (Line~\ref{line:elimination-sewr}), called successive elimination with wide radius (\sewr).
A similar algorithm design idea was also used in~\cite {lykouris2018stochastic} for the stochastic bandit problem with corruption attacks. Theorem~\ref{thm:addtive-upper-bound-known-attack} provides the regret upper bound of Algorithm~\ref{alg:se-wcr}.



\begin{algorithm}[tp]
    \caption{\sewr: Successive Elimination with Wide Confidence Radius}
    \label{alg:se-wcr}
    \begin{algorithmic}[1]
        \Input total attack \(C\) (or budget \(\CInput\)), number of arms \(K\), decision rounds \(T\), parameter \(\delta\)
        \State Initialize candidate arm set \(\mathcal{S} \gets \mathcal{K}\), number of arm pulls \(N_k\gets 0\), reward estimates \(\tilde \mu_k \gets 0\)
        \While{\(t\le T\)}\label{line:while}
        \State Uniformly pull each arm in \(\mathcal{S}\) once and observes \(\tilde X_{k}\) for all arms \(k\in\mathcal{S}\)
        \State Update parameters: \(t\gets t+\abs{\mathcal{S}}, N_k\gets N_k + 1,\tilde\mu_k\gets \frac{\tilde\mu_k (N_k - 1) + \tilde X_{k}}{N_k}\) for all arms \(k\in \mathcal{S}\)
        \State \(\tilde{\mu}_{\max} \gets \max_k \tilde{\mu}_k\)
        \State \(\mathcal{S} \gets \left\{k\in \mathcal{S}: \tilde\mu_k \ge \tilde\mu_{\max} - 2\left( \sqrt{\frac{\log(2KT/\delta)}{N_{k}}} + \frac{C}{N_k} \right)\right\}\) \label{line:elimination-sewr}
        \EndWhile
    \end{algorithmic}
\end{algorithm}

\begin{restatable}{theorem}{addtiveknownbound}
    \label{thm:addtive-upper-bound-known-attack}
    Given the total attack or its upper bound \(C\), Algorithm~\ref{alg:se-wcr}'s the realized regret is upper bounded by \(
        R_T \le O\left( \sum_{k\neq k^*} \frac{\log(KT/\delta)}{\Delta_k} + KC\right) \text{ with probability } 1-\delta,
    \)
    and, with \(\delta=K/T\), its pseudo regret is upper bounded as \(
    \bar R_T \le O\left( \sum_{k\neq k^*} \frac{\log T}{\Delta_k} + KC\right).
    \)
\end{restatable}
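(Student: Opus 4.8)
The plan is to follow the standard clean-event analysis for successive elimination, adapted to the widened confidence radius. First I would define the clean event $\mathcal{E}$: for every arm $k$ and every round $t$, the true mean $\mu_k$ lies within the widened confidence interval centered at the attacked empirical mean $\tilde\mu_k$ with radius $r_k \coloneqq \sqrt{\log(2KT/\delta)/N_k} + C/N_k$. To show $\P(\mathcal{E}) \ge 1-\delta$, I would decompose $\tilde\mu_k - \mu_k$ into (i) the deviation of the \emph{unattacked} empirical mean from $\mu_k$, controlled by Hoeffding's inequality to within $\sqrt{\log(2KT/\delta)/N_k}$, and (ii) the cumulative attack on arm $k$ divided by $N_k$, which is at most $C/N_k$ since $\sum_t |X_{I_t,t} - \tilde X_{I_t,t}| \le C$ bounds the attack on any single arm. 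A union bound over the at most $K T$ (arm, round) pairs gives the $\delta$ failure probability.

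Next, conditioning on $\mathcal{E}$, I would argue the optimal arm $k^*$ is never eliminated: its widened interval always contains $\mu_{k^*}$, which is at least as large as $\mu_k$ for every surviving $k$, so $\tilde\mu_{k^*}$ cannot fall below $\tilde\mu_{\max} - 2r_k$ for the relevant comparison. Then I would bound how many times a suboptimal arm $k$ can be pulled before elimination. On $\mathcal{E}$, if arm $k$ survives after $N_k$ pulls, a chain of inequalities through the elimination rule yields $\Delta_k \le 4\big(\sqrt{\log(2KT/\delta)/N_k} + C/N_k\big)$ (comparing $k$ against $k^*$ via $\tilde\mu_{\max}$). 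Solving this for $N_k$: either the $\sqrt{\cdot}$ term dominates, giving $N_k = O(\log(KT/\delta)/\Delta_k^2)$, or the $C/N_k$ term dominates, giving $N_k = O(C/\Delta_k)$. Multiplying by $\Delta_k$ and summing over $k \neq k^*$ gives the regret contribution $O\big(\sum_{k\neq k^*} \log(KT/\delta)/\Delta_k\big)$ from the first case and $O\big(\sum_{k\neq k^*} C\big) = O(KC)$ from the second; note the key improvement over the corruption analysis is that the $C/\Delta_k$ term becomes $C$ after multiplying by $\Delta_k$, so summing gives $KC$ rather than $\sum_k C/\Delta_k$.

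For the regret bound I also need to handle that arms are pulled in synchronized rounds of size $|\mathcal{S}|$, but since each arm's pull count is what drives the per-arm regret $N_k \Delta_k$, the round structure only affects the total horizon, not the per-arm accounting; summing $N_k\Delta_k$ over suboptimal arms directly bounds $R_T$ on $\mathcal{E}$. Finally, the pseudo-regret statement follows by taking $\delta = K/T$: on $\mathcal{E}$ (probability $\ge 1-K/T$) the regret is $O(\sum_{k\neq k^*}\log T/\Delta_k + KC)$, and on the complement (probability $\le K/T$) the regret is at most $T$, contributing $O(K)$ in expectation, which is absorbed. The main obstacle I anticipate is being careful in the ``solve for $N_k$'' step: the inequality $\Delta_k \le 4(\sqrt{a/N_k} + C/N_k)$ is not linear, so I would split into the two regimes cleanly (e.g., whichever of the two terms is larger is at least $\Delta_k/8$) and verify the constants, and also double-check that the attack budget $C$ genuinely bounds the per-arm attack rather than only the aggregate — which holds because absolute values are summed over all rounds regardless of which arm was pulled.
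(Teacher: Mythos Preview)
Your proposal is correct and follows essentially the same route as the paper: establish the clean event via Hoeffding plus the deterministic \(C/N_k\) attack bound, show \(k^*\) is never eliminated, prove suboptimal arm \(k\) is eliminated once \(N_k = O(\log(KT/\delta)/\Delta_k^2 + C/\Delta_k)\) (the paper states this as Lemma~\ref{lma:tighter-sample-complexity}), and then sum \(N_k\Delta_k\), so that the \(C/\Delta_k\) term contributes \(KC\) rather than \(\sum_k C/\Delta_k\). The only step you glossed over is passing from \(\sum_k N_k\Delta_k\) to the \emph{realized} regret \(R_T\); the paper handles this separately by bounding \(\sqrt{N_k\log(2KT/\delta)} \le N_k\Delta_k/8\) on the clean event, which your Hoeffding bounds already support.
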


The regret upper bound in Theorem~\ref{thm:addtive-upper-bound-known-attack} matches the lower bound in~\eqref{eq:gap-dependent-lower-bound} in terms of both the first classic regret term and the second additive attack \(\Omega(KC)\) term. This shows that the regret upper bound is nearly optimal up to some prefactors.
Our regret upper bounds improve the \(O\left( \sum_{k\neq k^*} \frac{\log(KT/\delta)}{\Delta_k} + \sum_{k\neq k^*} \frac{C}{\Delta_k}\right)\) in~\citet[Theorem 1]{lykouris2018stochastic} by removing the multiplicative dependence of \(1/\Delta_k\) on the total attack \(C\) term.
This improvement comes from a tighter analysis for sufficient pulling times for eliminating a suboptimal arm (see Lemma~\ref{lma:tighter-sample-complexity} in Appendix~\ref{app:proof-known}).
This improvement is crucial for achieving our tight multiplicative regret bounds for both known and unknown attack budget cases (Sections~\ref{subsec:algorithm-known-multiplicative} and~\ref{subsec:algorithm-unknown-multiplicative}).


\subsection{\sewrst: An Algorithm with Gap-Independent Upper Bounds}
\label{subsec:algorithm-known-multiplicative}

We follow the approach of stochastic bandits literature~\citep[Chapter 6]{lattimore2020bandit} to transfer the algorithm with a gap-dependent upper bound to another algorithm with gap-independent bounds. The main idea is to replace the while loop condition in Line~\ref{line:while} in Algorithm~\ref{alg:se-wcr} with nuanced stopping conditions.
If the stopping condition is triggered before the end of the \mab game, i.e., \(t=T\), the algorithm will uniformly pull the remaining arms in the candidate set \(\mathcal{S}\) until the end of the game. As the pseudo-code of this algorithm is similar to \sewr, we defer it to Algorithm~\ref{alg:se-wcr-stop} in Appendix~\ref{app:addition-algorithm}.

We consider two kinds of stopping conditions.
The first condition aims to transfer the gap-dependent bound to an independent one while keeping the additive \(KC\) term in the upper bound.
We start from a stopping condition \(N_k \le \frac{\log (KT/\delta)}{\epsilon^2} + \frac{C}{\epsilon}\), where \(\epsilon\) is a parameter to be determined. With this condition and the proof details of Theorem~\ref{thm:addtive-upper-bound-known-attack}, we can upper bound the realized regret as follows,
\[
    \begin{split}
        R_T
        \le O\left( \frac{K\log (KT/\delta)}{\epsilon}+ \epsilon T + KC \right)
        \le O\left( \sqrt{KT\log (KT/\delta)} + KC \right),
    \end{split}
\]
where we choose \(\epsilon = \sqrt{K\log (KT/\delta)/{T}}\) in the last inequality.

The second stopping condition, while transferring the gap-dependent to independent, also converts the additive \(KC\) term to multiplication. To derive it, we start from a stopping condition \(N_k \le \frac{\log (KT/\delta) + C}{\epsilon^2}\), where \(\epsilon\) is a parameter to be determined. With this condition and the proof details of Theorem~\ref{thm:addtive-upper-bound-known-attack}, we can upper bound the realized regret as follows,
\begin{equation*}
    \begin{split}
        R_T
         & \le O\left(   \frac{K\log (KT/\delta) + KC}{\epsilon} + \epsilon T
        \right)
        \le O\left( \sqrt{KT(\log (KT/\delta) + C)} \right)
    \end{split}
\end{equation*}
where \(\epsilon = \sqrt{{(K\log (KT/\delta) + KC)} / {T}}\) in the last inequality.
We summarize the above results in the following theorem.

\begin{theorem}\label{thm:gap-independent-upper-bound-known-attack}
    Given the total attack or its upper bound \(C\), \sewrst (Algorithm~\ref{alg:se-wcr-stop}) with stopping condition \(N_k \le \frac{T}{K} + C\sqrt{\frac{T}{K\log(KT/\delta)}}\) has the realized regret upper bounded as follows, \begin{equation}
        \label{eq:additive-upper-bound-known-attack-independent}
        R_T \le O\left( \sqrt{KT\log (KT/\delta)} + KC \right) \text{ with probability } 1-\delta,
    \end{equation}
    and \sewrst with stopping condition \(N_k \le \frac{T}{K}\) has the realized regret upper bounded as follows,
    \begin{equation}
        \label{eq:multiplicative-upper-bound-known-attack}
        R_T \le O\left( \sqrt{KT(\log (KT/\delta) + C)} \right) \text{ with probability } 1-\delta.
    \end{equation}
\end{theorem}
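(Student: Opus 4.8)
The plan is to reuse the probabilistic scaffolding from the proof of Theorem~\ref{thm:addtive-upper-bound-known-attack} verbatim and redo only the regret bookkeeping in the presence of the stopping condition. Concretely, I would first invoke the "good event" of that proof: with probability at least $1-\delta$, simultaneously for every arm $k$ and every round, $\abs{\tilde\mu_k - \mu_k} \le \sqrt{\log(2KT/\delta)/N_k} + C/N_k$, where the first term is the Hoeffding deviation of the clean samples and the second absorbs the (at most $C$) total attack mass. On this event $k^*$ is never eliminated, and—this is the key quantitative input, Lemma~\ref{lma:tighter-sample-complexity}—a suboptimal arm $k$ is eliminated as soon as its pull count reaches $n_k := c\big(\log(2KT/\delta)/\Delta_k^2 + C/\Delta_k\big)$ for a universal constant $c$, since at that point both halves of arm $k$'s confidence radius fall below a fixed fraction of $\Delta_k$ while, by uniform sampling inside $\mathcal{S}$, $k^*$'s radius is of the same order.

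Second, I would analyze the modified algorithm (Algorithm~\ref{alg:se-wcr-stop}) as a two-phase procedure: an elimination phase running exactly as in Algorithm~\ref{alg:se-wcr} but halting the instant the common pull count of the surviving arms exceeds a threshold $m$ (the quantity behind the stated stopping condition), followed by a uniform phase that pulls the surviving set up to round $T$. Fix $\epsilon>0$ to be tuned and write $m$ in ``$\epsilon$-form'': $m = \Theta\big(\log(KT/\delta)/\epsilon^2 + C/\epsilon\big)$ for the first condition and $m = \Theta\big((\log(KT/\delta)+C)/\epsilon^2\big)$ for the second. Then I split the suboptimal arms by gap. For arms with $\Delta_k \gtrsim \epsilon$ one checks $n_k \le m$ (using $\Delta_k\le 1$ in the second case), so such an arm is eliminated during the elimination phase and never reaches the uniform phase; its contribution is $\Delta_k n_k = O\big(\log(KT/\delta)/\Delta_k + C\big) = O\big(\log(KT/\delta)/\epsilon + C\big)$, and summing over at most $K$ such arms gives $O\big(K\log(KT/\delta)/\epsilon + KC\big)$. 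For arms with $\Delta_k \lesssim \epsilon$—which, by monotonicity of $x \mapsto a/x^2 + b/x$, is precisely the set of arms that can survive the elimination phase—each pull costs regret $O(\epsilon)$, and since across the whole game these arms are collectively pulled on at most $T$ rounds, their combined contribution is $O(\epsilon T)$.

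Combining, $R_T = O\big(K\log(KT/\delta)/\epsilon + KC + \epsilon T\big)$ on the good event; for the second condition I instead fold $KC$ into the first term and use the looser but cleaner bound $O\big((K\log(KT/\delta)+KC)/\epsilon + \epsilon T\big)$. Optimizing—$\epsilon = \sqrt{K\log(KT/\delta)/T}$ in the first case, $\epsilon = \sqrt{(K\log(KT/\delta)+KC)/T}$ in the second—yields $O\big(\sqrt{KT\log(KT/\delta)} + KC\big)$ and $O\big(\sqrt{KT(\log(KT/\delta)+C)}\big)$ respectively, where in the second bound the leftover $KC$ is absorbed under the square root using $T>KC$. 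Substituting these choices of $\epsilon$ back into the $\epsilon$-form of $m$ recovers exactly the stated conditions $N_k \le T/K + C\sqrt{T/(K\log(KT/\delta))}$ and $N_k \le T/K$, so the theorem's phrasing is just this derivation read in reverse.

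The main obstacle, and where the constants genuinely need care, is the two-sided calibration of $m$: it must be large enough that every arm with $\Delta_k \gtrsim \epsilon$ is provably eliminated before the elimination phase stops—otherwise such an arm leaks into the uniform phase, where it can be pulled $\Theta(T)$ times and wreck the bound—yet its $\epsilon$-form must be tight enough that survivors are forced to satisfy $\Delta_k = O(\epsilon)$, so the uniform phase truly costs only $O(\epsilon T)$. Making this airtight means propagating the $C/N_k$ term consistently through both the elimination rule (Line~\ref{line:elimination-sewr}) and the stopping condition, and separately dispatching the edge case where round $T$ is reached before the stopping condition ever fires—there one simply falls back on $\sum_k N_k \le T$ together with the same gap split, so no new idea is required.
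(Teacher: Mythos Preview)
Your proposal is correct and follows essentially the same approach as the paper: parameterize the stopping threshold in ``$\epsilon$-form'', invoke the good event and Lemma~\ref{lma:tighter-sample-complexity} from Theorem~\ref{thm:addtive-upper-bound-known-attack}, split arms by whether $\Delta_k \gtrsim \epsilon$, bound the two pieces as $O(K\log(KT/\delta)/\epsilon + KC)$ and $O(\epsilon T)$, and optimize $\epsilon$ to recover the stated stopping conditions. The paper's derivation in \S\ref{subsec:algorithm-known-multiplicative} is exactly this, only more terse; your additional remarks on calibrating $m$ and the edge case where $t=T$ fires first are welcome robustness checks not spelled out in the paper.
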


Let \(\delta=K/T\), the regret in~\eqref{eq:additive-upper-bound-known-attack-independent} becomes \(O\left( \sqrt{KT\log T} + KC \right)\), matching with the lower bound in~\eqref{eq:gap-independent-lower-bound} up to logarithmic factors. This implies the regret upper bound is nearly optimal.
Let \(\delta=K/T\), the regret in~\eqref{eq:multiplicative-upper-bound-known-attack} becomes \(O( \sqrt{KT(\log T + C)})\). Although this bound cannot match the lower bound in~\eqref{eq:gap-independent-lower-bound}, its dependence on total attack \(C\) is square root, better than the linear dependence in~\eqref{eq:additive-upper-bound-known-attack-independent}. This better dependence will shine when devising algorithms for unknown attack budget in \S\ref{subsec:algorithm-unknown-multiplicative}.

\begin{remark}[Algorithm selection for known \(C\) case]
Although this section proposes three algorithms for known \(C\)---one \sewr and two \sewrst, the latter two \sewrst algorithms are mainly for the theoretical interest of deriving gap-independent and multiplicative \(C\) bounds. Practically, \sewr should have better empirical performance than that of \sewrst, as \sewrst may stop eliminating relatively good suboptimal arms earlier than \sewr, causing a slightly higher regret.
\end{remark}

\section{Algorithms with Unknown Attack Budget}\label{sec:algorithm-unknown}

This section presents algorithms that can defend against adversarial attacks with unknown attack budgets. We first present an algorithm with an additive upper bound in \S\ref{subsec:algorithm-unknown-additive} and then an algorithm with a multiplicative upper bound in \S\ref{subsec:algorithm-unknown-multiplicative}.

Before diving into the detailed algorithm designs, we recall that the input attack budget \(\CInput\) for \sewr and \sewrst can be interpreted as the level of robustness that the algorithms have.
For any actual attack \(C \le \CInput\), the algorithms achieve the regret upper bounds in Theorems~\ref{thm:addtive-upper-bound-known-attack} and~\ref{thm:gap-independent-upper-bound-known-attack} respectively; for attack \(C > \CInput\), the algorithms may be misled by the attacker and suffer linear regret.
This ``robust-or-not'' separation demands the knowledge of the actual attack \(C\) for \sewr and \sewrst to perform well.
In this section, we apply two types of ``smoothing'' algorithmic techniques to ``smooth'' this ``robust-or-not'' separation of \sewr and \sewrst so as to achieve robustness against unknown attacks. An overview of algorithm design is in Figure~\ref{fig:algorithm-design}.

On the other hand, robust algorithms designed for unknown corruptions, e.g.,~\citet{lykouris2018stochastic,gupta2019better}, do not apply to the unknown attack setting. Because these robust algorithms all rely on some randomized action mechanism to defend against corruption, which is invalid for the attack as the attacker can manipulate the rewards of the arms \emph{after} observing the learner's actions.

\subsection{\pewr: An Algorithm with Additive Upper Bound}\label{subsec:algorithm-unknown-additive}

The first ``smoothing'' algorithmic technique is applying the multi-phase structure to modify \sewr (Algorithm~\ref{alg:se-wcr}), yielding the phase elimination with wide confidence radius algorithm, \pewr (Algorithm~\ref{alg:phase-elimination-wcr}) for unknown attack budget.
This algorithm considers multiple phases of \sewr, each with a different assumed attack budget \(\CInput\), and the candidate arm set \(\mathcal{S}\) is updated consecutively among phases, that is, the arms eliminated in previous phases are not considered in later phases.


Specifically, \pewr separates the total decision rounds into multiple phases \(h\in\{1,2,\dots\}\),
each with a length doubled from the previous phase (Line~\ref{line:double-length}).
In phase \(h\), the elimination assumes the potential attack is upper bounded by \(\hat C_h\), which is halved in each phase (Line~\ref{line:halving-attack-budget}). This mechanism results in two kinds of phases. Denote \(h'\) as the first phase whose corresponding \(\hat C_{h'}\) is smaller than the true attack budget \(C\). Then, for phases \(h<h'\), we have \(\hat C_h \ge C\), and, therefore, the algorithm eliminates arms properly as \sewr;
for phases \(h\ge h'\), as \(\hat C_h < C\), the algorithm may falsely eliminate the optimal arm and suffer extra regret. But even though the optimal arm is eliminated, the algorithm in these phases still has the level of \(\hat C_h\) robustness, which guarantees some relatively good suboptimal arms---with a mean close to the optimal arm---remaining in the candidate set \(\mathcal{S}\) such that the total regret due to losing the optimal arm in these phases is not large (bounded by \(KC^2\) in the proof).
We present the pseudo-code of \pewr in Algorithm~\ref{alg:phase-elimination-wcr} and its regret upper bound in Theorem~\ref{thm:addtive-upper-bound-unknown-attack}.


\begin{algorithm}[tp]
    \caption{\pewr: Phase Elimination with Wide Confidence Radius}
    \label{alg:phase-elimination-wcr}
    \begin{algorithmic}[1]
        \Input number of arms \(K\), decision rounds \(T\), confidence parameter \(\delta\)
        \State Initialize candidate arm set \(\mathcal{S} \gets \mathcal{K}\), the number of arm pulls \(N_k\gets 0\), reward mean estimates \(\hat\mu_k \gets 0\), phase \(h\gets 0\), phase upper bound \(H \gets \ceil{\log_2 T} - 1\), initial pulling times \(L_0 \gets K\)
        \For{each phase \(h=0, 1,\dots\)}
        \State \(\hat C_h \gets \min \{\sqrt{T}/K, 2^{H-h-1}\}\) \label{line:halving-attack-budget}
        \State Pull each arm in \(\mathcal{S}\) for \({L_h}/{\abs{\mathcal{S}_h}}\) times
        \State Update arm empirical means \(\hat\mu_{k,h}\) for all arms \(k\in\mathcal{S}\)
        \State \(\mathcal{S}_{h+1} \gets \left\{k\in \mathcal{S}_h: \hat\mu_{k,h} \ge \max\limits_{k'\in\mathcal{S}_h} \hat\mu_{k',h} - 2\left( \sqrt{\frac{\log(2KH/\delta)}{L_h / \abs{\mathcal C_h}}} + \frac{\hat C_h}{L_h/\abs{\mathcal C_h}} \right)\right\}\) \label{line:elimination}
        \State \(L_{h+1} \gets 2L_h \) \label{line:double-length}
        \EndFor
    \end{algorithmic}
\end{algorithm}


\begin{restatable}{theorem}{addtiveunknownbound}
    \label{thm:addtive-upper-bound-unknown-attack}
    Algorithm~\ref{alg:phase-elimination-wcr} has a realized regret upper bound, with a probability of at least \(1-\delta\),
    \(
            R_T \le {O}\left( \sqrt{KT\log\left( {K \log T}/{\delta} \right)} + \sqrt{T} \log T+  KC\log T + KC^2\right)
    \)
    as well as a pseudo regret upper bound as follows, with \(\delta = {K}/{ T}\),
    \[
        \begin{split}
            \bar R_T & \le {O}\left( \sqrt{KT\log\left({T \log T} \right)} + \sqrt{T} \log T+  KC\log T + KC^2\right)
            = \tilde{O}\left( \sqrt{KT} + KC^2\right).
        \end{split}
    \]
\end{restatable}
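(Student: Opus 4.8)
The plan is a phase‑by‑phase analysis of \pewr organized around the first ``under‑budgeted'' phase. Fix notation: let $m_h \coloneqq L_h/\abs{\mathcal{S}_h}$ be the per‑arm pull count in phase $h$ (so $m_h\ge 2^h$ since $L_h=2^hK$ and $\abs{\mathcal{S}_h}\le K$), let $C_h$ be the attack actually spent during phase $h$ (so $\sum_h C_h\le C$), and let $h'$ be the first phase with $\hat C_{h'}<C$; call phases $h\le h'$ \emph{clean} (then $\hat C_h\ge C\ge C_h$) and phases $h>h'$ \emph{dirty}. First I would set up the clean event: by Hoeffding on the per‑phase empirical mean of the \emph{raw} rewards and a union bound over the $K$ arms and the $H=O(\log T)$ phases, with probability $\ge 1-\delta$ every surviving arm in every phase has raw empirical mean within $\sqrt{\log(2KH/\delta)/m_h}$ of $\mu_k$; since the phase‑$h$ attack shifts the \emph{attacked} empirical mean by at most $C_h/m_h$ further, this is precisely the confidence radius used in Line~\ref{line:elimination} but with $C_h$ in place of $\hat C_h$. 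Everything below is conditioned on this event.

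For the clean phases I would reuse the successive‑elimination‑with‑doubling argument behind Theorem~\ref{thm:addtive-upper-bound-known-attack}. By induction on $h\le h'$, $k^*\in\mathcal{S}_h$, since $\hat C_{h-1}\ge C_{h-1}$ keeps $k^*$'s widened interval valid so the threshold never excludes it. Hence any $k\in\mathcal{S}_h$ has $\Delta_k\le 4r_{h-1}$ with $r_{h-1}=\sqrt{\log(2KH/\delta)/m_{h-1}}+\hat C_{h-1}/m_{h-1}$, so the phase‑$h$ regret is at most $L_h\cdot 4r_{h-1}=O(\sqrt{KL_{h-1}\log(KH/\delta)}+K\hat C_{h-1})$ after substituting $m_{h-1}=L_{h-1}/\abs{\mathcal{S}_{h-1}}$ and $L_h=2L_{h-1}$. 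Summing over $h\le h'$: the first terms form a geometric series dominated by the last term, which is $O(\sqrt{KT\log(KH/\delta)})$ because $\sum_h L_h=O(T)$; the $K\hat C_{h-1}$ terms split into the $\Theta(\log T)$ phases where $\hat C_h$ sits at its cap (contributing $O(\sqrt T\log T)$ once multiplied by $K$) and the later geometrically halving ones (contributing $O(KC)$ as $\hat C_{h'-1}\ge C$). This produces the $O(\sqrt{KT\log(KH/\delta)}+\sqrt T\log T+KC)$ part of the bound.

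The crux is the dirty phases, where $k^*$ may be eliminated; there I track the candidate gap $g_h\coloneqq \mu_{k^*}-\max_{k\in\mathcal{S}_h}\mu_k\ge 0$ (so $g_h=0$ for $h\le h'$). The key lemma is a cancellation: if the best surviving arm of phase $h$ is eliminated, then the arm $j_h$ with the largest attacked mean survives, and writing out its elimination inequality and invoking the clean event makes the stochastic radius cancel the $\sqrt{\log(\cdot)/m_h}$ term and $\hat C_h$ cancel itself, leaving $g_{h+1}\le g_h+(C_h-2\hat C_h)^+/m_h$ (and $g_{h+1}=g_h$ if no elimination occurs). So $g_h$ is non‑decreasing and $g_H\le\sum_h(C_h-2\hat C_h)^+/m_h$ (also $g_H\le 1$). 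A per‑phase bound as in the clean case gives dirty‑phase regret $O(L_h g_{h-1}+\sqrt{KL_{h-1}\log(KH/\delta)}+K\hat C_{h-1}+KC_{h-1})$; the last three pieces sum to the same $\sqrt{KT\log(KH/\delta)}+\sqrt T\log T+KC$ as before, while $\sum_h L_h g_{h-1}\le\bigl(\sum_h L_h\bigr)\,g_H=O(T)\,g_H$. To finish, observe that (after clipping observations to $[0,1]$, which never hurts) no elimination can occur while $2r_h\ge 1$, which forces any \emph{eliminating} dirty phase to have $m_h$ large relative to $\hat C_h$; combined with $\hat C_h<C$ and the schedule $\hat C_h=\min\{\sqrt T/K,2^{H-h-1}\}$, $L_h=2^hK$, this yields $m_h=\Omega(T/C)$ up to lower‑order factors for every such phase, hence $(C_h-2\hat C_h)^+/m_h=O(CC_h/T)$, $g_H=O(C^2/T)$, and $\sum_h L_h g_{h-1}=O(C^2)=O(KC^2)$. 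Collecting, $R_T=O(\sqrt{KT\log(K\log T/\delta)}+\sqrt T\log T+KC\log T+KC^2)$ with probability $1-\delta$; plugging $\delta=K/T$ and using $\bar R_T\le\mathbb{E}[R_T]$ together with $R_T\le T$ on the failure event gives the stated pseudo‑regret $\tilde O(\sqrt{KT}+KC^2)$.

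I expect the last step of the dirty‑phase analysis to be the main obstacle: establishing $m_h=\Omega(T/C)$ for every phase in which an elimination happens, uniformly over the attacker's adaptive split of $C$ across phases and over which arms play the roles of the eliminated best arm and its replacement $j_h$. This is exactly where the design choices pay off — the cap keeps the confidence radius $\Theta(1)$ (so nothing is eliminated) until a phase is long enough, and the geometric growth of $L_h$ makes the tail $\sum_{h''>h}L_{h''}$ comparable to $L_h$ — but it must be argued without foreknowledge of the attacker's allocation. By comparison, the clean‑event concentration, the clean‑phase bookkeeping, and the reduction of the low‑probability event to $R_T\le T$ are all routine.
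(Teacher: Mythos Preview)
Your plan is essentially the paper's proof: split at the first under-budgeted phase $h'$, keep $k^*$ alive before $h'$ via the \sewr analysis, track the candidate gap $g_h=\mu_{k^*}-\mu_{k_h^*}$ and bound its per-phase drift after $h'$, then control the dirty contribution via the schedule (the paper uses $H-h'\le\log_2 C$ where you use $\sum_h L_h=O(T)$ together with $g_H=O(C^2/T)$; these are equivalent arithmetic on the same quantities, and your per-phase $C_h$ accounting is a harmless refinement of the paper's cruder use of the global $C$). The step you flag as the main obstacle is in fact immediate: under $C\le\sqrt{T}/K$ (which the paper assumes at the outset), any phase contributing to the drift has $\hat C_h<C\le\sqrt{T}/K$, so the cap is not binding, i.e.\ $\hat C_h=2^{H-h-1}<C$, whence $m_h\ge L_h/K=2^h>2^{H-1}/C=\Omega(T/C)$ purely from the deterministic schedule---no ``$2r_h\ge 1$'' detour and no uniformity over the attacker's allocation is needed. (Minor slip: with your definition of $h'$ the clean phases should be $h<h'$, not $h\le h'$.)
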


Comparing our upper bound in Theorem~\ref{thm:addtive-upper-bound-unknown-attack} to the lower bound in Proposition~\ref{prop:additive-lower-bound} with \(\alpha = {1}/{2}\), \(\Omega(\sqrt{T} + C^2)\), we know that our upper bound is tight in terms of both \(T\) and \(C\). Compared with the lower bound \(\Omega(\sqrt{KT} + KC)\) in Theorem~\ref{prop:combined-lower-bound}, our upper bound is also tight in terms of \(K\).

This multi-phase elimination idea is widely used in bandit literature, e.g., batched bandits~\citep{gao2019batched}, multi-player bandits~\citep{wang2019distributed}, and linear bandits~\citep{bogunovic2020corruption}. The most related to ours is the phase elimination algorithm proposed in~\citet{bogunovic2020corruption} for linear bandits. However, directly reducing their results to our stochastic bandit case would yield a \(\tilde{O}(K\sqrt{KT} + C^2)\) upper bound for \(C\le \frac{\sqrt{T}}{(K\log\log K)\log T}\) only, and, for general \(C\), their results would become \(\tilde{O}(K\sqrt{KT} + K^2C^2)\).
Our result in Theorem~\ref{thm:addtive-upper-bound-unknown-attack} is tighter than theirs by a factor of the number of arms \(K\) on the first \(\sqrt{T}\) term, and, for the general \(C\) case, our result is tighter by a factor of \(K\) on the second \(C^2\) term as well.
This improvement comes from the tighter confidence bound employed in Line~\ref{line:elimination} of Algorithm~\ref{alg:phase-elimination-wcr} and the corresponding tighter analysis for the necessary number of observations for the algorithm to eliminate arms (Lemma~\ref{lma:tighter-sample-complexity}) in \S\ref{subsec:algorithm-known-additive}.

\subsection{\mssewr: An Algorithm with Multiplicative Upper Bound}\label{subsec:algorithm-unknown-multiplicative}


While the additive bound in Theorem~\ref{thm:addtive-upper-bound-unknown-attack} can be transferred to a multiplicative bound \(\tilde{O}(KC^2\sqrt{T})\),
this bound cannot match the lower bound in Proposition~\ref{prop:multiplicative-lower-bound}, e.g., \(\Omega(C\sqrt{T})\) for \(\alpha = \frac 1 2\).
In this section, we deploy another algorithmic technique, model selection, to \sewrst (Algorithm~\ref{alg:se-wcr}) to achieve tight multiplicative upper bounds for the unknown attack budget case.
Unlike the multi-phase technique, where each phase assumes a different attack budget, we consider multiple algorithm instances with different attack budget inputs, each of which is a ``model.''
Model selection means selecting the model (algorithm instance) that performs best in the unknown environment. In our scenario, it is to select the instance with the smallest attack budget input \(\CInput\) that is larger than the actual attack budget \(C\).

Specially, we consider \(G\coloneqq\ceil{\log_2 T}\) instances of \sewrst, each with a different attack budget input \(\CInput=2^g\), \(g=1,2,\dots,G\).
Among these instances, the best is \(g^* = \ceil{\log_2 C}\), as \(2^{g^*} \in [C, 2C)\) is the smallest attack budget larger than the actual attack \(C\).
We then apply a model selection algorithm, e.g., \texttt{CORRAL}~\citep{agarwal2017corralling} or \texttt{EXP3.P}~\citep{auer2002nonstochastic}, to select the best one among the \(G\) instances.
Algorithm~\ref{alg:model-selection} presents the pseudo-code.
To prove the regret upper bound, we recall a simplified version of~\citet[Theorem 5.3]{pacchiano2020model} in the following lemma.

\begin{algorithm}[tp]
    \caption{\mssewr: Model Selection with Wide Confidence Radius}
    \label{alg:model-selection}
    \begin{algorithmic}[1]
        \Input number of arms \(K\), decision rounds \(T\), model selection algorithm \(\texttt{ModSelAlg}\)
        \State Construct \(G\coloneqq \ceil{\log_2 T}\) base algorithm instances as
        \newline \null\hspace{3cm} \(
            \mathcal{B} \gets \{\sewrst(C=2^g, K, T, \delta= K/ T): g=1,2,\dots, G\}.
        \)
        \State \(\texttt{ModSelAlg}(\mathcal{B})\) for \(T\) decision rounds
    \end{algorithmic}
\end{algorithm}

\begin{lemma}\label{lma:model-selection}
    If base algorithm instance has regret upper bound \(\bar R_T \le T^\alpha c(\delta)\) with probability \(1-\delta\) for known constant \(\alpha \in [\frac 1 2, 1)\) and the unknown function \(c:\mathbb{R}\to\mathbb{R}\), 
    then the regrets of \emph{\texttt{CORRAL}} and \emph{\texttt{EXP3.P}} are \(\bar R_T \le \tilde{O}(c(\delta)^{\frac{1}{\alpha}} T^{\alpha})\) and \(\bar R_T \le \tilde{O}(c(\delta)^{\frac{1}{2-\alpha}} T^{\alpha})\) respectively.
\end{lemma}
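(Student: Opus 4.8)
The plan is to deduce the statement from the black-box model-selection guarantee of \citet[Theorem~5.3]{pacchiano2020model}, whose hypothesis is exactly a ``smoothly growing'' regret bound for at least one base learner, so most of the work is a bookkeeping reduction. First I would verify the hypothesis: each of the $G=\lceil\log_2 T\rceil$ base instances is an \sewrst run with a fixed input budget, and by Theorem~\ref{thm:gap-independent-upper-bound-known-attack} the instance whose input budget dominates the true attack $C$ enjoys, with probability $1-\delta$, a regret bound of exactly the assumed shape $\bar R_n\le n^{\alpha}c(\delta)$ when it is invoked for $n$ rounds (here $\alpha=\tfrac12$ and $c(\delta)$ collects the $K$, $C$, and $\log$ factors, e.g.\ $c(\delta)=\tilde{O}(\sqrt{K C})$ for the dominating instance). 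A union bound over the $G$ instances makes this high-probability event simultaneous for all of them, replacing $\delta$ by $\delta/G$, which costs only an extra $\log\log T$ absorbed into $\tilde{O}(\cdot)$.

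Next I would recall the master-algorithm regret decomposition. Running a master --- \texttt{CORRAL} from \citet{agarwal2017corralling} with log-barrier learning rate $\eta$, or \texttt{EXP3.P} from \citet{auer2002nonstochastic} with exploration rate $\gamma$ --- over the $G$ base learners, the cumulative regret against any fixed base learner $i$ splits into (i) the master's own overhead, $\tilde{O}(G/\eta+\eta T)$ for \texttt{CORRAL} and a $\sqrt{\,\cdot\,}$-shaped term in $G,T,\gamma$ for \texttt{EXP3.P}, plus (ii) the base learner's own regret evaluated at the \emph{random} number of rounds $n_i$ it is actually invoked, re-weighted by its inverse selection probability (the master feeds it importance-weighted rewards). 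Since $n\mapsto n^{\alpha}$ is concave for $\alpha<1$, Jensen's inequality turns the assumed bound into $\mathbb{E}[(n_i)^{\alpha}]\,c(\delta)\le \mathbb{E}[n_i]^{\alpha}c(\delta)$, so the only free quantity left is $x:=\mathbb{E}[n_i]\in[0,T]$.

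The third step is the rate optimization, and this is where the two exponents emerge. For \texttt{CORRAL}, the log-barrier introduces a negative drift term $-x/\tilde{O}(\eta)$ that cancels the inverse-probability inflation of the re-weighted base regret; after the cancellation one is left with $\tilde{O}\!\big(G/\eta+\eta T+(\eta\,c(\delta))^{\alpha/(1-\alpha)}c(\delta)\big)$, and choosing $\eta$ to balance the $T$-linear term against the last term (and absorbing $G=O(\log T)$) gives $\bar R_T\le\tilde{O}(c(\delta)^{1/\alpha}T^{\alpha})$. For \texttt{EXP3.P} there is no such drift term, so the re-weighted base regret is controlled directly through $1/\gamma$ and one simply bounds $x\le T$; balancing the resulting $\gamma^{-1}T^{\alpha}c(\delta)$ against the exploration overhead and optimizing $\gamma$ yields $\bar R_T\le\tilde{O}(c(\delta)^{1/(2-\alpha)}T^{\alpha})$.

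I expect the main obstacle to be making step two rigorous: the quantity $R^{(i)}(n_i)$ is defined through the base learner's internal clock, which advances only when the master selects it and which sees importance-weighted feedback rather than the raw reward stream, so one must argue that the clean high-probability bound of Theorem~\ref{thm:gap-independent-upper-bound-known-attack} still applies to this re-weighted run and that this high-probability event is compatible with the expectation taken in the Jensen step. This is precisely the technical content packaged by \citet[Theorem~5.3]{pacchiano2020model}; concretely, the cleanest route is to check that \sewrst meets that theorem's smoothness (``$(\alpha,c(\delta))$-bounded regret'') condition and then invoke it verbatim, so that beyond the union bound of step one no new probabilistic argument is needed.
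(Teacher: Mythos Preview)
The paper does not prove this lemma at all: it is introduced in the text with ``we recall a simplified version of~\citet[Theorem 5.3]{pacchiano2020model} in the following lemma,'' and is used purely as a black-box citation. Your proposal ultimately lands in the same place --- you explicitly say the cleanest route is to verify the $(\alpha,c(\delta))$-bounded-regret condition and ``invoke it verbatim'' --- so on the level of what the paper actually does, you are aligned.

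Where you differ is that you additionally sketch \emph{why} the Pacchiano et al.\ theorem holds (the master--base decomposition, the Jensen step on $n\mapsto n^{\alpha}$, the learning-rate optimizations for \texttt{CORRAL} versus \texttt{EXP3.P}). This is fine as exposition but is not part of the paper's argument, and in fact is somewhat more than the lemma requires: the lemma is a general model-selection statement, independent of \sewrst, so your first paragraph's discussion of the $G=\lceil\log_2 T\rceil$ base instances and the union bound over them properly belongs to the \emph{application} of the lemma (Theorem~\ref{thm:multiplicative-upper-bound-unknown-attack}), not to its proof. If you want to mirror the paper exactly, the ``proof'' is a one-line citation; your extra content is a useful but optional unpacking of the cited result.
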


To apply this lemma to our case, we first convert the gap-independent upper bound in~\eqref{eq:multiplicative-upper-bound-known-attack} of \sewrst (Algorithm~\ref{alg:se-wcr-stop}) in Theorem~\ref{thm:gap-independent-upper-bound-known-attack} to a multiplicative upper bound: When the stop condition is \(N_k \le  \frac{T}{K}\), the realized regret upper bound is \(\bar R_T \le {O}(\sqrt{KTC\log (KT/\delta)})\) with a probability of at least \(1-\delta\). Plugging this into Lemma~\ref{lma:model-selection} with \(\alpha = \frac{1}{2}\) and \(c(\delta) = \sqrt{C\log(KT/\delta)}\) and letting \(\delta = K/T\), we have the pseudo regret upper bounds for \mssewr as follows:

\begin{theorem}\label{thm:multiplicative-upper-bound-unknown-attack}
    Deploying \sewrst (stop condition \(N_k \le T/K\)) as the base algorithm instance,
    Algorithm~\ref{alg:model-selection} with \emph{\texttt{CORRAL}} has pseudo regret upper bound \(\bar R_T \le \tilde{O}(\sqrt{KC}T^{\frac 2 3})\),
    and Algorithm~\ref{alg:model-selection} with \emph{\texttt{EXP3.P}} has pseudo regret upper bound \(\bar R_T \le \tilde{O}(KC\sqrt T)\).
\end{theorem}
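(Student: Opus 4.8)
The plan is to reduce the theorem to Lemma~\ref{lma:model-selection} by exhibiting one base instance of \sewrst whose pseudo-regret has the rate $n^{1/2}c(\delta)$ with $c(\delta)=\tilde{O}(\sqrt{KC})$. First I would dispose of the easy regimes: if $C<1$ the attack shifts the cumulative reward by less than one, so \mssewr inherits the $\tilde{O}(\sqrt{KT})$ guarantee of any of its base instances; if $C>T/(2K)$ both target bounds already exceed the trivial $R_T\le T$. So assume $1\le C\le T/(2K)$, and set $g^\star\coloneqq\ceil{\log_2 C}$, so that $1\le g^\star\le G=\ceil{\log_2 T}$ and the $g^\star$-th \sewrst instance has input budget $\CInput^{(g^\star)}=2^{g^\star}\in[C,2C)$ --- this is the ``good'' instance the model-selection wrapper should track.

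\textbf{Step 1: the good instance is a $\sqrt{\cdot}$-rate learner that is never fooled.} Whenever \texttt{ModSelAlg} selects the $g^\star$-th instance it pulls an arm and observes an attacked reward; the attack charged to its own active rounds is at most the global budget $C\le\CInput^{(g^\star)}$, so Theorem~\ref{thm:gap-independent-upper-bound-known-attack} applies to it without the attacker ever misleading it. I would then observe that the gap-independent analysis behind equation~\eqref{eq:multiplicative-upper-bound-known-attack}, carried out for any number $n\le T$ of the instance's active rounds and with $\delta=K/T$, yields realized regret $O\!\bigl(\sqrt{Kn(\log T+C)}\bigr)$ with probability $1-\delta$, hence --- after the standard realized-to-expected conversion losing an additive $O(\delta T)=O(K)$ --- a pseudo-regret bound
\[
\bar R^{(g^\star)}_n\;\le\;c(\delta)\,n^{1/2},\qquad c(\delta)=O\!\left(\sqrt{K(\log T+C)}\right)=\tilde{O}(\sqrt{KC}),\qquad\alpha=\tfrac12,
\]
with $c(\delta)$ independent of $n$: exactly the hypothesis of Lemma~\ref{lma:model-selection}. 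I would also note that the pure-elimination, wide-radius structure of \sewrst makes its per-round regret a pathwise sum of surviving-arm gaps, which is what makes the instance compatible with the importance-reweighting/smoothing used inside \texttt{CORRAL} and \texttt{EXP3.P}.

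\textbf{Step 2: invoke the model-selection bound.} With the above in hand, I would apply Lemma~\ref{lma:model-selection} (whose $\tilde{O}(\cdot)$ already absorbs the $\log_2 T$ dependence on the number of base instances), once with \texttt{CORRAL} as \texttt{ModSelAlg} and once with \texttt{EXP3.P}, using $\alpha=\tfrac12$ and $c(\delta)=\tilde{O}(\sqrt{KC})$. One extra point to check here is that Lemma~\ref{lma:model-selection} controls the wrapper's regret in the attacked feedback it observes, whereas~\eqref{eq:pseudo-regret} is in raw rewards; the two differ by at most the total attack $C$, which is dominated by either target bound. Substituting $\alpha=\tfrac12$, $c(\delta)=\tilde{O}(\sqrt{KC})$ and $\delta=K/T$ into the \texttt{CORRAL} and \texttt{EXP3.P} branches of Lemma~\ref{lma:model-selection}, and folding the $\mathrm{polylog}(T)$ factors into $\tilde{O}(\cdot)$, then gives $\bar R_T\le\tilde{O}(\sqrt{KC}\,T^{2/3})$ and $\bar R_T\le\tilde{O}(KC\sqrt{T})$ respectively.

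\textbf{Expected main obstacle.} The edge-case reductions and the realized-to-expected bookkeeping are routine, and Step~2 is essentially mechanical once Lemma~\ref{lma:model-selection} is available. The hard part is Step~1: making rigorous that the good instance genuinely retains an \emph{anytime} $c(\delta)\,n^{1/2}$ pseudo-regret rate when it is driven by the adaptive, truncated, importance-reweighted stream produced by the wrapper --- i.e.\ that \sewrst meets the compatibility/stability condition abstracted by Lemma~\ref{lma:model-selection} --- and simultaneously that the attacker, whose budget is shared across \emph{all} $G$ instances, can never concentrate enough of it on the $g^\star$ instance's own rounds to fool it. The wide confidence radius handles the second concern and the pure-elimination structure handles the first; granting both, the rest follows from Lemma~\ref{lma:model-selection}.
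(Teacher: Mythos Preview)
Your proposal is correct and follows essentially the same route as the paper: convert the multiplicative bound~\eqref{eq:multiplicative-upper-bound-known-attack} for \sewrst (stop condition $N_k\le T/K$) into the form $n^{1/2}c(\delta)$ with $c(\delta)=\tilde{O}(\sqrt{KC})$, and then invoke Lemma~\ref{lma:model-selection} with $\alpha=\tfrac12$ and $\delta=K/T$. Your write-up is in fact more careful than the paper's one-line argument---you explicitly identify the good base instance $g^\star=\ceil{\log_2 C}$, handle the edge regimes $C<1$ and $C>T/(2K)$, and flag the anytime/compatibility requirement that Lemma~\ref{lma:model-selection} implicitly needs---so the only part left is the mechanical substitution, which the paper also treats as immediate.
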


The bounds in Theorem~\ref{thm:multiplicative-upper-bound-unknown-attack} are tight in terms of the trade-off of \(T\) and \(C\), as they respectively match the lower bounds in Proposition~\ref{prop:multiplicative-lower-bound}, \(\Omega(\sqrt{C} T^{\frac 2 3})\) when \(\alpha = \frac 2 3\) and \(\Omega(C\sqrt{T})\) when \(\alpha = \frac 1 2\).
The challenge of achieving these tight regret bounds in Theorem~\ref{thm:multiplicative-upper-bound-unknown-attack} is in discovering the right base algorithm instances with suitable multiplicative upper bounds.
For example, the first upper bound in Theorem~\ref{thm:gap-independent-upper-bound-known-attack} can also be converted to the multiplicative 
\(\bar R_T \le {O}(C\sqrt{KT\log (KT/\delta)})\) bound with a probability of at least \(1-\delta\).
However, applying the model selection technique to this bound only results in \(\tilde{O}(C\sqrt{K} T^{\frac 2 3})\) and \(\tilde{O}(C^2 K\sqrt T)\) bounds, which are not as tight as the ones in Theorem~\ref{thm:multiplicative-upper-bound-unknown-attack}.

This model selection technique was also used in~\citet{kang2024robust} to boost the algorithm for Lipchitz bandits with a known attack budget to an algorithm with an unknown attack budget. However, their regret upper bounds, mapping to our \mab setting, are \(\tilde{O}(C^{\frac{1}{K+2}}T^{\frac{K+2}{K+3}})\) and \(\tilde{O}(C^{\frac{1}{K+1}}T^{\frac{K+1}{K+2}})\), which have a much worse dependence on \(T\) than the ones in Theorem~\ref{thm:multiplicative-upper-bound-unknown-attack}, especially when \(K\) is large.
This is because the regret upper bounds of their base algorithms have a much worse dependence on \(T\) than our base \sewrst algorithm.
This highlights the importance of devising the suitable base algorithm instances for \mab with the known attack budget in \S\ref{subsec:algorithm-known-multiplicative}.

\begin{remark}[Regret comparison for unknown \(C\) case in Figure~\ref{fig:unknown-C-comparison}]\label{remark:regret-bounds-comparison}
    The section presents three different upper bounds for unknown attacks, one additive \(\tilde{O}(\sqrt{KT} + KC^2)\) by \pewr and two multiplicative \(\tilde{O}(C\sqrt{K}T^{\frac 2 3}), \tilde{O}(C^2 K \sqrt T)\) by two types of \mssewr. 
    Giving attack \(C = T^\beta\) for some parameter \(\beta\in [0,1]\), these three bounds can be written in the form of \(\tilde{O}(T^\gamma)\) for corresponding exponents \(\gamma\).
    Figure~\ref{fig:unknown-C-comparison} compares these bounds in terms of the exponent \(\gamma\) by varying \(\beta\).
    Comparing all three bounds, the additive bound is better than both multiplicative ones when \(\beta \le \frac 4 9\), and the \(\tilde{O}(C\sqrt{K}T^{\frac 2 3})\) bound is better than the additive one when \(\beta > \frac 4 9\).
    Comparing among the two multiplicative bounds, the \(\tilde{O}(C^2 K \sqrt T)\) is better than \(\tilde{O}(C\sqrt{K}T^{\frac 2 3})\) when \(\beta \le \frac 1 3\), and vice versa.
    Since Figure~\ref{fig:unknown-C-comparison} suggests that no single algorithm consistently outperforms the others, and without prior knowledge of \(C\), there is no definitive rule for selecting a specific algorithm in practice.
\end{remark}



\textbf{Conclusion Remark}
This paper presents a comprehensive study of bandit algorithms robust to adversarial attacks, examining two scenarios: attacks with or without knowledge of the attack budget, and two types of regret bounds—additive and multiplicative dependence on the total attack \(C\). For each of these four cases, we propose corresponding algorithms and establish their regret upper bounds, which are subsequently demonstrated to be tight by providing regret lower bounds.


\bibliography{reference}
\bibliographystyle{plainnat}


\newpage
\appendix

\section{Other Model Details: Corruption Process, Regret Discussion}\label{apx:corruption}

In this section, we provide more details on the corruption process. At each round \(t\in\{1,2,\dots, T\}\), the adversary observes the realized rewards \(X_{k,t}\) for all arms \(k\), as well as the rewards and actions of the learner in previous rounds. The adversary then chooses the corrupted rewards \(\tilde{X}_{k,t}\) for all arms \(k\). The learner pulls an arm \(I_t\) (maybe randomly) and observes the attacked reward \(\tilde{X}_{I_t, t}\). We summarize the decision process in Procedure~\ref{proc:decision-under-corruption}.
We denote the total amount of corruption as \[
    C' \coloneqq \sum_{t=1}^T \max_k \abs{X_{k,t} - \tilde{X}_{k,t}},
\]
where \(I_t\) is the pulled arm at time slot \(t\).

\floatname{algorithm}{Procedure}
\begin{algorithm}
    \caption{Decision under Corruptions}\label{proc:decision-under-corruption}
    \begin{algorithmic}[1]
        \For{each round \(t=1,2,\dots,T\)}
        \State Stochastic rewards $X_{k,t}$ are drawn from all arms $k$
        \State The adversary observes the realized rewards $X_{k,t}$ \newline\null\hfill
        as well as the rewards and actions of the learner in previous rounds
        \State The adversary chooses the corruption rewards $\tilde{X}_{k,t}$ for all arms $k$
        \State The learner pulls an arm $I_t$ (maybe randomly) and observes the attacked reward $\tilde{X}_{I_t, t}$
        \EndFor
    \end{algorithmic}
\end{algorithm}
\floatname{algorithm}{Algorithm}

\begin{remark}[Regret defined by corrupted rewards]
    Beside the regret defined on raw reward in~\eqref{eq:realized-regret}, one can also define the regret based on the corrupted rewards as follows, \(
    \tilde{R}_T \coloneqq \max_{k\in\mathcal{K}} \sum_{t\in\mathcal{T}} (\tilde{X}_{k,t} - \tilde{X}_{I_t, t}).
    \)
    This definition is considered in~\citet{zimmert2021tsallis,bogunovic2020corruption}. As the adversary can manipulate at most \(C\) rewards, one can easily show that, \(
    \abs{\tilde R_T - R_T} \le 2C.
    \)
    We will focus on the regret defined by raw rewards~\eqref{eq:realized-regret} in the following, as most results in this paper have at least a linear dependence on \(C\).
\end{remark}

\section{Proof for Lower Bound} \label{sec:proof_for_lower_bound}

\KClowerbound*

\begin{proof}[Proof of Theorem~\ref{thm:KC-lower-bound}]
    We consider \(K\) instances of a \mab game where for instance \(\mathcal{I}_k\) for any \(k\in\{1,2,\dots,K\}\), the arm \(k\) has reward \(\epsilon > 0\) and all other arms have reward \(0\) without any noise.

    The adversary's policy is that whenever the algorithm pulls an arm with a non-zero reward, the adversary attacks the algorithm by changing the reward of the arm to \(0\). The adversary can conceal the optimal arm for \(\floor{\frac{C}{\epsilon}}\) pulls.

    After \(\floor{\frac C \epsilon} \floor{\frac K 2}\) time slots, there exists at least \(\floor{\frac K 2}\) arms that are pulled at most \(\floor{\frac C \epsilon}\) times. That is, if the optimal arm is among these \(\floor{\frac K 2}\) arms, the algorithm cannot identify the optimal arm.
    Let us pick the instance whose optimal arm is among these \(\floor{\frac K 2}\) arms. Then, during these \(\floor{\frac C \epsilon} \floor{\frac K 2}\) time slots, the algorithm needs to pay the regret of \[
        \floor*{\frac C \epsilon} \left(  \floor*{\frac K 2} - 1  \right) \epsilon = \Omega(KC).
    \]
\end{proof}

\textbf{Proposition~\ref{prop:combined-lower-bound}}
    \textit{ For gap-dependent regret bounds and any consistent bandit algorithm\footnote{When \(C=0\), a consistent bandits algorithm has regret \(\bar R_T \le O(T^\alpha)\) for any \(\alpha \in (0,1)\).}, the lower bound is roughly \(\Omega\left( \sum_{k\neq k^*} \frac{\log T}{\Delta_k} + KC \right)\), or formally, for some universal constant \(\xi>0\),
    \[
        \liminf_{T\to\infty} \frac{\bar R_T - \xi KC}{\log T} \ge \sum_{k\neq k^*} \frac{1}{2\Delta_k}.
    \]
    For gap-independent cases, the lower bound is \[
        \bar R_T \ge \Omega(\sqrt{KT} + KC).
    \]}

\begin{proof}[Proof of Proposition~\ref{prop:combined-lower-bound}]
    From bandits literature, e.g.,~\citet[Chapter 16]{lattimore2020bandit}, we know that the pseudo regret of any consistent bandit algorithm can be lower bounded as follows, \[\liminf_{T\to\infty} \frac{\bar R_T}{\log T} \ge \sum_{k\neq k^*} \frac{1}{\Delta_k}.\]
    On the other hand, we can rewrite the \(\Omega(KC)\) lower bound in Theorem~\ref{thm:KC-lower-bound} as follows, \[
        \liminf_{T\to\infty} \frac{\bar R_T - \xi KC}{\log T} \ge 0,
    \]
    where \(\xi>0\) is a universal constant. Adding the above two inequalities and dividing both side by \(2\), we have \[
        \liminf_{T\to\infty} \frac{\bar R_T - (\xi/2) KC}{\log T} \ge \sum_{k\neq k^*} \frac{1}{2\Delta_k}.
    \]

    For the gap-independent case, recall in bandits literature~\citep[Chapter 15]{lattimore2020bandit}, we have the \(\bar R_T \ge \Omega(\sqrt{KT})\) lower bound.
    Combining it with the \(\Omega(KC)\) lower bound in Theorem~\ref{thm:KC-lower-bound} yields \(\bar R_T \ge \Omega(\max\{\sqrt{KT}, KC\}) = \Omega(\sqrt{KT} + KC)\) lower bound.
\end{proof}

\textbf{Proposition~\ref{prop:additive-lower-bound}} (Achievable additive regret bounds)
\textit{Given any parameter \(\alpha \in [\frac 1 2, 1)\), for any bandit algorithm that achieves an additive regret upper bound of the form \(O(T^\alpha + C^\beta)\), we have \(\beta \ge \frac{1}{\alpha}\).}

\textbf{Proposition~\ref{prop:multiplicative-lower-bound}} (Achievable multiplicative regret bounds)
\textit{Given parameter \(\alpha \in [\frac 1 2, 1)\), for any bandit algorithm that achieves a multiplicative regret bound of the form \(O(T^\alpha C^\beta)\), we have \(\beta \ge \frac{1}{\alpha} - 1\).}

\begin{proof}[Proof of Propositions~\ref{prop:additive-lower-bound} and~\ref{prop:multiplicative-lower-bound}]
    We use contradiction to prove both propositions.
    We first prove Proposition~\ref{prop:additive-lower-bound}. Given Theorem~\ref{thm:linear-regret-condition}, we know that there exists an attack policy with \(C=\Theta(R_T)\) that can make the algorithm suffer linear regret. Thus, if the algorithm achieves an additive regret upper bound of the form \(O(T^\alpha + C^\beta)\) with \(\beta<\frac{1}{\alpha}\), then the algorithm only suffers a sublinear regret when \(C=T^\alpha\), which contradicts Theorem~\ref{thm:linear-regret-condition}. Thus, the parameter \(\beta \ge \frac{1}{\alpha}\).
    Proposition~\ref{prop:multiplicative-lower-bound} can be proved via a similar contradiction argument.
\end{proof}

\section{Deferred Algorithm Pseudo-Code}\label{app:addition-algorithm}

\begin{algorithm}
    \caption{\sewrst: Successive Elimination with Wide Confidence Radius and Stop Condition}
    \label{alg:se-wcr-stop}
    \begin{algorithmic}[1]
        \Input total attack \(C\), number of arms \(K\), decision rounds \(T\), confidence parameter \(\delta\)
        \State Initialize candidate arm set \(\mathcal{S} \gets \mathcal{K}\), the number of arm pulls \(N_k\gets 0\), and reward mean estimates \(\hat\mu_k \gets 0\)
        \While{\(N_k \le \frac{T}{K}\) and \(t\le T\)}\label{line:stop-condition} \Comment{or \(N_k \le \frac T K + C\sqrt{\frac{T}{K\log (KT/\delta)}}\)}
        \State Uniformly pull each arm in \(\mathcal{S}\) once and observes \(X_k\) for all arms \(k\in\mathcal{S}\)
        \State Update parameters: \(t\gets t+\abs{\mathcal{S}}, N_k\gets N_k + 1,\hat\mu_k\gets \frac{\hat\mu_k (N_k - 1) + X_{k}}{N_k}\)
        \State \(\hat{\mu}_{\max} \gets \max_k \hat{\mu}_k\)
        \State \(\mathcal{S} \gets \left\{k\in \mathcal{S}: \hat\mu_k \ge \hat\mu_{\max} - 2\left( \sqrt{\frac{\log(2KT/\delta)}{N_{k}}} + \frac{C}{N_k} \right)\right\}\)
        \EndWhile
        \State Uniformly pull arms in \(\mathcal{S}\) till the end of the game
    \end{algorithmic}
\end{algorithm}

\section{Proof for Upper Bounds with Known Attack}\label{app:proof-known}

\addtiveknownbound*


\begin{proof}[Proof of Theorem~\ref{thm:addtive-upper-bound-known-attack}]
    We first recall the following lemma from~\citet{lykouris2018stochastic}.
    \begin{lemma}[{\citet[Lemma 3.1]{lykouris2018stochastic}}]\label{lma:optimal-arm-not-eliminiated}
        With a probability of at least \(1-\delta\), the optimal arm \(k^*\) is never eliminated.
    \end{lemma}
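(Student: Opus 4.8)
\textbf{Proof proposal for Lemma~\ref{lma:optimal-arm-not-eliminiated}.}
The plan is to show that the widened confidence interval around each arm's attacked empirical mean traps the true mean with high probability, and then argue that the optimal arm, having the largest true mean, can never be the one to fall below the elimination threshold. First I would fix an arm $k$ and a value $n$ for its pull count, and decompose the attacked empirical mean $\tilde\mu_k$ into the true empirical mean $\hat\mu_k$ (average of the raw draws $X_{k,t}$) plus an attack term whose magnitude is at most $C_k/n$, where $C_k \coloneqq \sum_{t} \abs{X_{k,t} - \tilde X_{k,t}}$ is the attack spent on arm $k$. By Hoeffding's inequality, $\abs{\hat\mu_k - \mu_k} \le \sqrt{\log(2KT/\delta)/n}$ with probability at least $1 - \delta/(KT)$ for each fixed $(k,n)$; a union bound over all $k\in\mathcal K$ and all $n\le T$ gives a good event $\mathcal E$ of probability at least $1-\delta$ on which, simultaneously for every arm $k$ and every round,
\[
    \abs{\tilde\mu_k - \mu_k} \le \sqrt{\frac{\log(2KT/\delta)}{N_k}} + \frac{C_k}{N_k} \le \sqrt{\frac{\log(2KT/\delta)}{N_k}} + \frac{C}{N_k},
\]
using $C_k \le C$. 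I would call the right-hand side $r_k$, the confidence radius used in Line~\ref{line:elimination-sewr}.

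Next, work on the event $\mathcal E$ and suppose for contradiction that $k^*$ is eliminated at some round, say because $\tilde\mu_{k^*} < \tilde\mu_{\max} - 2 r_{k^*}$. Let $j$ be the arm attaining $\tilde\mu_{\max}$ at that round; note $j$ is still in $\mathcal S$, so the confidence bound applies to it as well. Then
\[
    \mu_{k^*} \le \tilde\mu_{k^*} + r_{k^*} < \tilde\mu_{\max} - 2r_{k^*} + r_{k^*} = \tilde\mu_j - r_{k^*} \le \tilde\mu_j + r_j - 2 r_{k^*} \le \mu_j + r_j + r_j - 2r_{k^*},
\]
and since both $j$ and $k^*$ are pulled the same number of times in each round of the uniform-sampling loop, $r_j = r_{k^*}$, so this yields $\mu_{k^*} < \mu_j$, contradicting the optimality of $k^*$. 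Hence on $\mathcal E$ the optimal arm is never eliminated, which is exactly the claim. (If the paper's elimination rule only compares to the empirical max rather than pairwise, the same chain goes through by bounding $\tilde\mu_{\max} \le \mu_j + r_j$ directly.)

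The main subtlety — and the step I would be most careful about — is the bookkeeping on the attack budget per arm and the fact that $r_j$ and $r_{k^*}$ coincide because \sewr pulls all surviving arms equally often; this equality is what makes the elimination threshold symmetric and lets the argument close without a factor loss. A secondary point to handle cleanly is the union bound: since $N_k$ only takes values in $\{1,\dots,\lceil T/K\rceil\}$ and the empirical mean after $n$ pulls is a fixed function of the raw draws, it suffices to union-bound over $(k,n)$ pairs, of which there are at most $KT$, which is why $\log(2KT/\delta)$ is the right choice inside the radius. Everything else is a routine concentration argument, so I would keep it brief and cite Hoeffding and Lemma 3.1 of~\citet{lykouris2018stochastic} for the standard parts.
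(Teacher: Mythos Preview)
Your argument is correct and is essentially the standard proof; the paper itself does not give a proof of this lemma but simply cites \citet[Lemma 3.1]{lykouris2018stochastic}. The two key points you identify --- that $C_k \le C$ makes the widened radius valid for every arm, and that uniform pulling forces $r_j = r_{k^*}$ so the elimination inequality collapses to $\mu_{k^*} < \mu_j$ --- are exactly what drives the original proof. Your intermediate step $\tilde\mu_j - r_{k^*} \le \tilde\mu_j + r_j - 2r_{k^*}$ is unnecessarily indirect (once you have $r_j = r_{k^*}$ you can go straight from $\tilde\mu_j - r_{k^*} = \tilde\mu_j - r_j \le \mu_j$), but the conclusion is unaffected.
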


    Next, we prove a lemma shows the sufficient number of pulling that are necessary for a suboptimal arm to be eliminated. This lemma is tighter than the one in~\citet[Lemma 3.2]{lykouris2018stochastic} (proved for \(N_k \ge  \frac{36\log (2KT/\delta) + 36C}{\Delta_k^2}\)), and, based on it, we derive tighter regret bounds.

    \begin{lemma}\label{lma:tighter-sample-complexity}
        With a probability of at least \(1-\delta\), all suboptimal arms \(k\neq k^*\) are eliminated after pulling each arm \(N_k \ge \frac{36\log (2KT/\delta)}{\Delta_k^2} + \frac{6C}{\Delta_k}\) times.
    \end{lemma}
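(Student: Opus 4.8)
The plan is to run the standard ``good event plus elimination'' argument for successive elimination, but with a slightly more careful accounting of the two error sources so that the stated constants $36$ and $6$ come out exactly. First I would fix the good event $\mathcal{G}$: for every arm $k\in\mathcal{K}$ and every $n\in\{1,\dots,T\}$, the empirical mean $\bar\mu_{k,n}$ of the \emph{unattacked} rewards over the first $n$ pulls of arm $k$ satisfies $\lvert\bar\mu_{k,n}-\mu_k\rvert\le\sqrt{\log(2KT/\delta)/n}$. Hoeffding gives each such event probability at least $1-\delta^2/(2K^2T^2)$, and a union bound over the at most $KT$ pairs $(k,n)$ yields $\P(\mathcal{G})\ge 1-\delta$; this is the same event underlying Lemma~\ref{lma:optimal-arm-not-eliminiated}, so on $\mathcal{G}$ the optimal arm $k^*$ is never eliminated. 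The key consequence I would record is that, on $\mathcal{G}$, after arm $k$ has been pulled $N_k$ times its \emph{attacked} empirical mean obeys $\lvert\tilde\mu_k-\mu_k\rvert\le\sqrt{\log(2KT/\delta)/N_k}+C/N_k$: indeed $\tilde\mu_k-\mu_k=(\tilde\mu_k-\bar\mu_{k,N_k})+(\bar\mu_{k,N_k}-\mu_k)$, the first term being bounded in absolute value by the total attack spent on arm $k$'s pulls divided by $N_k$, hence by $C/N_k$, and the second by $\sqrt{\log(2KT/\delta)/N_k}$ on $\mathcal{G}$.

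Next I would isolate the elimination step. Write $L\coloneqq\log(2KT/\delta)$ and $w(N)\coloneqq\sqrt{L/N}+C/N$ for the confidence half-width after $N$ pulls. Consider a round at which a suboptimal arm $k$ is still in $\mathcal{S}$ and has been pulled $N_k$ times (if $k$ was eliminated earlier there is nothing to prove, and if $36L/\Delta_k^2+6C/\Delta_k>T$ the claim for arm $k$ is vacuous, so assume otherwise). Since $k^*$ is never eliminated on $\mathcal{G}$ and successive elimination pulls every surviving arm the same number of times, $k^*$ has also been pulled $N_k$ times; hence $\tilde\mu_{\max}\ge\tilde\mu_{k^*}\ge\mu_{k^*}-w(N_k)$ while $\tilde\mu_k\le\mu_k+w(N_k)$, and subtracting gives $\tilde\mu_{\max}-\tilde\mu_k\ge\Delta_k-2w(N_k)$. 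The elimination rule in Line~\ref{line:elimination-sewr} removes $k$ as soon as $\tilde\mu_{\max}-\tilde\mu_k>2w(N_k)$, so it suffices to show that $N_k\ge 36L/\Delta_k^2+6C/\Delta_k$ forces $\Delta_k>4w(N_k)$.

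The last step, and the one needing the most care, is the constant bookkeeping $N_k\ge 36L/\Delta_k^2+6C/\Delta_k\implies 4\sqrt{L/N_k}+4C/N_k\le\Delta_k$: controlling each of the two terms by $\Delta_k/2$ separately would need the worse constants $64$ and $8$, so instead I would keep the terms coupled. Setting $p\coloneqq 36L/\Delta_k^2$ and $q\coloneqq 6C/\Delta_k$ so that $N_k\ge p+q$, one gets $4\sqrt{L/N_k}\le\tfrac{2\Delta_k}{3}\sqrt{p/(p+q)}$ and $4C/N_k\le\tfrac{2\Delta_k}{3}\cdot q/(p+q)$; with $t\coloneqq p/(p+q)\in[0,1]$ this gives $4w(N_k)\le\tfrac{2\Delta_k}{3}\bigl(\sqrt{t}+(1-t)\bigr)=\tfrac{2\Delta_k}{3}\bigl(1+(\sqrt{t}-t)\bigr)\le\tfrac{2\Delta_k}{3}\cdot\tfrac54<\Delta_k$, using $\max_{t\in[0,1]}(\sqrt{t}-t)=\tfrac14$, attained at $t=\tfrac14$. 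Therefore $\Delta_k>4w(N_k)$, so on $\mathcal{G}$ arm $k$ is eliminated once it has been pulled $36L/\Delta_k^2+6C/\Delta_k$ times, which is the claim. Beyond this inequality I expect the only subtlety to be the mild point that the union bound defining $\mathcal{G}$ must range over all pull counts $n\le T$, so that the confidence radius remains valid at whatever (random) round the threshold is crossed.
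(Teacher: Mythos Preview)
Your proof is correct and follows the same overall skeleton as the paper: define the Hoeffding good event, invoke Lemma~\ref{lma:optimal-arm-not-eliminiated} so that $k^*$ survives and has the same pull count as $k$, then show that once $N_k$ crosses the threshold the inequality $\Delta_k>4\bigl(\sqrt{L/N_k}+C/N_k\bigr)$ holds, triggering elimination.

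The one genuine difference is in the constant bookkeeping. The paper's own proof bounds the two summands of $w(N_k)$ separately by $\Delta_k/8$ each, which---as you correctly note---yields the constants $64$ and $8$ rather than the $36$ and $6$ in the lemma statement. Your coupled argument, writing $t=p/(p+q)$ and using $\max_{t\in[0,1]}(\sqrt{t}-t)=\tfrac14$, is sharper and actually delivers the stated constants; in that sense your write-up is cleaner than the paper's here. Everything else (the union bound over all $(k,n)$ pairs so the radius is valid at the random crossing time, the decomposition $\tilde\mu_k-\mu_k=(\tilde\mu_k-\bar\mu_{k,N_k})+(\bar\mu_{k,N_k}-\mu_k)$, and the use of equal pull counts) matches the paper's reasoning.
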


    \begin{proof}[Proof for Lemma~\ref{lma:tighter-sample-complexity}]
        With Lemma~\ref{lma:optimal-arm-not-eliminiated}, we know that the optimal arm \(k^*\) is never eliminated. Thus, in the following, we show that  the suboptimal arm \(k\) will be eliminated by the optimal arm \(k^*\) on or before \(N_k \ge \frac{64\log (2KT/\delta)}{\Delta_k^2} + \frac{8C}{\Delta_k}\).

        Denote \(\hat{\mu}_k\) as the empirical mean of the raw stochastic reward observation of arm \(k\).
        By Hoeffding's inequality, we have, with a probability of at least \(1-\frac{\delta}{KT}\),
        \begin{align}\label{eq:hoeffding-inequality-application}
            \abs{\hat{\mu}_k - \mu_k}
             & \le \sqrt{\frac{\log(2KT/\delta)}{N_k}}
            \\
             & \le \frac{\Delta_k}{8},\nonumber
        \end{align}
        where the last inequality is due to \(N_k \ge \frac{64\log (2KT/\delta)}{\Delta_k^2}\).

        Comparing the true empirical mean \(\hat{\mu}_k\) with the attacked empirical mean \(\mu_k\), we have
        \[\abs{\tilde{\mu}_k - \hat{\mu}_k} \le \frac{C}{N_k} \le \frac{\Delta_k}{8},\]
        where the last inequality is due to \(N_k \ge \frac{8C}{\Delta_k}\).
        Combining the two inequalities, we have \[
            \abs{\tilde{\mu}_k - {\mu}_k} \le \frac{\Delta_k}{4}.
        \]

        Last, we show that the elimination condition for arm \(k\) would have been trigger before \(N_k \ge \frac{64\log (2KT/\delta)}{\Delta_k^2} + \frac{8C}{\Delta_k}\) as follows, \begin{align*}
            \tilde{\mu}_{k^*} - 2\left( \sqrt{\frac{\log(2KT/\delta)}{N_{k}}} + \frac{C}{N_k} \right)
             & \ge \tilde{\mu}_{k^*} - 2\left( \frac{\Delta_k}{8} + \frac{\Delta_k}{8}\right)
            \\
             & \ge \mu_{k^*} - \frac{\Delta_k}{4}          - \frac{2\Delta}{4}
            \\
             & =\mu_{k} + \frac{\Delta}{4}
            \\
             & \ge \tilde{\mu}_k.
        \end{align*}
    \end{proof}




    As the pseudo regret is upper bounded at most \(\sum_k \Delta_k N_k\), we have
    \begin{equation}\label{eq:regret-decomposition-upper-bound-known-attack}
        \begin{split}
            \bar R_T
             & \le \sum_{k\neq k^*} \Delta_k N_k
            \\
             & \le \sum_{k\neq k^*} \Delta_k \left( \frac{64\log(2KT/\delta)}{\Delta_k^2} + \frac{8C}{\Delta_k} \right)
            \\
             & = O\left( \sum_{k\neq k^*}  \frac{\log(KT/\delta)}{\Delta_k} + KC  \right)
            \\
             & \le O\left( \sum_{k\neq k^*} \frac{\log T}{\Delta_k} + KC\right),
        \end{split}
    \end{equation}
    where the last inequality is by choosing \(\delta = \frac{K}{T}\).

    \textbf{Transfer psudoe-regret to realized regret.}
    Next, we show the high probability bound for the realized regret \(R_T\). Multiplying \(N_k\) to both side of~\eqref{eq:hoeffding-inequality-application}, we have that, for any arm \(k\), the difference between the actual accumulated reward and its expected counterpart is upper bounded by \(\sqrt{N_k \log (2KT/\delta)}\) with a probability of at least \(1-\delta/KT\). The probability of any of above event does not hold is at most \(1-\delta\).

    For any suboptimal arm \(k\), we bound the difference as follows, \[
        \sqrt{N_k \log (2KT/\delta)} \le N_k \sqrt{\frac{\log (2KT/\delta)}{N_k}}
        \le N_k \Delta_k \sqrt{\frac{\log (2KT/\delta)}{64\log(2KT/\delta) + 8C \Delta_k}}
        \le \frac{N_k\Delta_k}{8}.
    \]

    In case that the arm \(k'\) with highest empirical mean is not the optimal arm \(k^*\), for example, this arm \(k'\) has a smaller reward gap \(\Delta_k \le O(\sqrt{{1}/{T}})\), the regret reduction due to this event is at most \(N_{k'}\Delta_{k'}\).

    Putting the potential impact of different kinds of reward realization above, we can bound the realized regret by \(O(\sum_{k\neq k^*} N_k\Delta_k)\) still. Thus, with similar derivation as~\eqref{eq:regret-decomposition-upper-bound-known-attack}, we have the realized regret of the algorithm is at most
    \[
        O\left( \sum_{k\neq k^*} \frac{\log(KT/\delta)}{\Delta_k} + KC\right) \text{ with probability } 1-\delta.
    \]
\end{proof}

\section{Proof for Upper Bounds with unknown Attack}

\addtiveunknownbound*

\begin{proof}[Proof of Theorem~\ref{thm:addtive-upper-bound-unknown-attack}]
    For \(C>\sqrt{T}/K\), the \(\tilde{O}(\sqrt{KT} + KC^2) = O(T)\) upper bound holds trivially. Hence, we only consider \(C\le \sqrt{T}/K\) in this proof.

    Note that the elimination condition in Algorithm~\ref{alg:phase-elimination-wcr} fails with a probability of at most \(\delta/KH\).
    Applying a union bound to sum all potential failure probabilities show that, with a probability of at least \(1-\delta\), the elimination condition works properly for all arms in all phases.
    In the rest of the proof, we exclude this failure probability and assume that the elimination only fails when \(C>\hat C_h\).

    Denote \(H'\) as the actually number of phases, while recall \(H = \ceil{\log_2 T}-1\) is the upper bound of the number of phases.
    As the analysis of Algorithm~\ref{alg:se-wcr} shows, for phase \(h\) with \(\hat C_h > C\), the optimal arm \(k^*\) is in the candidate set \(\mathcal{S}_h\) with high probability.
    Denote \(h'\) as the first phase with \(\hat C_{h'} \le  C\), implying \(H - h'\le \log_2 C\).

    Denote \(k_h^*\) as the arm with highest reward mean in phase \(h\).
    Then, we have \(k_h^* = k^*\) for \(h<h'\).
    Next, we use induction to show, for \(h\ge h'\), the remaining best arm \(k_h^*\) is close to the optimal arm \(k^*\) as follows, \[
        \mu_{k^*} - \mu_{k_h^*} \le \frac{2C}{L_h/\abs{\mathcal{S}_h}}.
    \]

    If the optimal arm is eliminated in phase \(h'\), then we have \[
        \hat{\mu}_{k^*,h'} \le \max_{k\in\mathcal{S}_{h'}} \hat{\mu}_{k,h'} - 2\left( \sqrt{\frac{\log(2KH/\delta)}{L_{h'}/\abs{\mathcal{S}_{h'}}}} + \frac{\hat C_{h'}}{L_{h'}/\abs{\mathcal{S}_{h'}}} \right).
    \]

    Denote \(\hat k_{h'}^* =\argmax_{k\in\mathcal{S}_{h'}} \hat{\mu}_{k,h'}\). We have
    \begin{align*}
        \mu_{\hat{k}_{h'}^*}
         & \overset{(a)}\ge \hat{\mu}_{\hat{k}_{h'}^*,h'} - \left( \sqrt{\frac{\log(2KH/\delta)}{L_{h'}/\abs{\mathcal{S}_{h'}}}} + \frac{C}{L_{h'}/\abs{\mathcal{S}_{h'}}} \right)
        \\
         & \overset{(b)}\ge \hat{\mu}_{k^*,h'} + 2\left( \sqrt{\frac{\log(2KH/\delta)}{L_{h'}/\abs{\mathcal{S}_{h'}}}} + \frac{\hat C_{h'}}{L_{h'}/\abs{\mathcal{S}_{h'}}} \right) - \left( \sqrt{\frac{\log(2KH/\delta)}{L_{h'}/\abs{\mathcal{S}_{h'}}}} + \frac{C}{L_{h'}/\abs{\mathcal{S}_{h'}}} \right)
        \\
         & \overset{(c)}\ge \mu_{k^*} + 2\left( \sqrt{\frac{\log(2KH/\delta)}{L_{h'}/\abs{\mathcal{S}_{h'}}}} + \frac{\hat C_{h'}}{L_{h'}/\abs{\mathcal{S}_{h'}}} \right) - 2\left( \sqrt{\frac{\log(2KH/\delta)}{L_{h'}/\abs{\mathcal{S}_{h'}}}} + \frac{C}{L_{h'}/\abs{\mathcal{S}_{h'}}} \right)
        \\
         & = \mu_{k^*} - \frac{2(C - \hat C_{h'})}{L_{h'}/\abs{\mathcal{S}_{h'}}}
        \\
         & \ge \mu_{k^*} - \frac{2C}{L_{h'}/\abs{\mathcal{S}_{h'}}}
        \\
         & \ge \mu_{k^*} - \frac{2CK}{L_{h'}}
    \end{align*}
    where inequalities (a) and (c) come from the Hoeffding's inequality and the attack \(C\),
    inequality (b) is due to the elimination condition in round \(h'\).
    That is, \[
        \mu_{k^*} - \mu_{\hat{k}_{h'}^*} \le \frac{2CK}{L_{h'}}.
    \]

    With similar analysis, we can show, \(
    \mu_{\hat{k}_{h'+i}^*} - \mu_{\hat{k}_{h'+i+1}^*} \le \frac{2CK}{L_{h'+i}}, \text{ for all } i=1,\dots, H-h'-1.
    \)
    Based on the inequality, we further have, for any \(h\ge h',\)
    \begin{equation}
        \label{eq:remaining-best-arm-close-to-optimal-arm}
        \mu_{k^*} - \mu_{\hat{k}_{h}^*} \le \sum_{i=0}^{h-h'} \frac{2CK}{L_{h'+i}} =\sum_{i=0}^{h-h'} \frac{2CK}{L_0 2^{h'+i}} \le \frac{2CK}{L_0 2^{h'+1}}.
    \end{equation}

    Next, we decompose the pseudo regret as follows,
    \begin{align*}
        \bar R_T
         & = \sum_{t=1}^T (\mu_{k^*} - \mu_{I_t}) = \sum_{k\in\mathcal{K}} N_{k,T} (\mu_{k^*} - \mu_k)
        = \sum_{h=0}^{H'} \sum_{k\in\mathcal{S}_h} \frac{L_h}{\abs{\mathcal{S}_h}}(\mu_{k^*} - \mu_k)
        \\
         & \le L_0 + \sum_{h=1}^{h'} \sum_{k\in\mathcal{S}_h} \frac{L_h}{\abs{\mathcal{S}_h}}(\mu_{k^*} - \mu_k) + \sum_{h=h'+1}^{H'} \sum_{k\in\mathcal{S}_h} \frac{L_h}{\abs{\mathcal{S}_h}}(\mu_{k^*} - \mu_k)
        \\
         & = L_0 + \underbrace{\sum_{h=1}^{h'} \sum_{k\in\mathcal{S}_h} \frac{L_h}{\abs{\mathcal{S}_h}}(\mu_{k^*} - \mu_k) +
            \sum_{h=h'+1}^{H'} \sum_{k\in\mathcal{S}_h} \frac{L_h}{\abs{\mathcal{S}_h}} (\mu_{k_h^*} -  \mu_k)}_{\text{Part I}}
        \\
         & \qquad + \underbrace{\sum_{h=h'+1}^{H'} \sum_{k\in\mathcal{S}_h} \frac{L_h}{\abs{\mathcal{S}_h}} (\mu_{k^*} - \mu_{k_h^*})}_{\text{Part II}}.
    \end{align*}

    To bound Part I, we notice that all arms in \(\mathcal{S}_h\) are not eliminated in the end of the previous phase, implying, for \(h<h'\),
    \[
        \begin{split}
            \mu_k
             & \overset{(a)}\ge \hat{\mu}_k - \left( \sqrt{\frac{\log(2KH/\delta)}{L_{h-1}/\abs{\mathcal{S}_{h-1}}}} + \frac{C}{L_{h-1}/\abs{\mathcal{S}_{h-1}}} \right)
            \\
             & \overset{(b)}\ge \max_{k'\in\mathcal{S}_{h-1}} \hat \mu_{k', h-1} - 2\left( \sqrt{\frac{\log(2KH/\delta)}{L_{h-1}/\abs{\mathcal{S}_{h-1}}}} + \frac{\hat C_{h-1}}{L_{h-1}/\abs{\mathcal{S}_{h-1}}} \right)
            \\
             & \qquad - \left( \sqrt{\frac{\log(2KH/\delta)}{L_{h-1}/\abs{\mathcal{S}_{h-1}}}} + \frac{C}{L_{h-1}/\abs{\mathcal{S}_{h-1}}} \right)
            \\
             & \overset{(c)}\ge \hat \mu_{k^*, h-1} - \left( 3 \sqrt{\frac{\log(2KH/\delta)}{L_{h-1}/\abs{\mathcal{S}_{h-1}}}} + \frac{2 \hat C_{h-1} + C}{L_{h-1}/\abs{\mathcal{S}_{h-1}}} \right)
            \\
             & \overset{(d)}\ge \mu_{k^*} - \left( 4 \sqrt{\frac{\log(2KH/\delta)}{L_{h-1}/\abs{\mathcal{S}_{h-1}}}} + \frac{2 \hat C_{h-1} + 2 C}{L_{h-1}/\abs{\mathcal{S}_{h-1}}} \right),
        \end{split}
    \]
    where inequalities (a) and (d) are due to the Hoeffding's inequality and the attack \(C\), inequality (b) is due to the elimination condition, and inequality (c) is due to \(k^*\in\mathcal{S}_{h-1}\).
    That is, for arms \(k\in\mathcal{S}_h\), we have
    \begin{equation}\label{eq:bound-Delta-k}
        \mu_{k^*} - \mu_k \le 4 \sqrt{\frac{\log(2KH/\delta)}{L_{h-1}/\abs{\mathcal{S}_{h-1}}}} + \frac{2 (\hat C_{h-1} + C)}{L_{h-1}/\abs{\mathcal{S}_{h-1}}}.
    \end{equation}

    As we have \(k_h^*\in \mathcal{S}_h\), and with the same derivation as~\eqref{eq:bound-Delta-k}, we also have \begin{equation}
        \label{eq:bound-Delta-k-h-star}
        \mu_{k_h^*} -  \mu_k \le 4 \sqrt{\frac{\log(2KH/\delta)}{L_{h-1}/\abs{\mathcal{S}_{h-1}}}} + \frac{2 (\hat C_{h-1} + C)}{L_{h-1}/\abs{\mathcal{S}_{h-1}}}.
    \end{equation}

    Therefore, we have
    \begin{equation}\label{eq:bound-regret-part-I}
        \begin{split}
             & \quad \sum_{h=1}^{h'} \sum_{k\in\mathcal{S}_h} \frac{L_h}{\abs{\mathcal{S}_h}}\Delta_k
            + \sum_{h=h'+1}^{H'} \sum_{k\in\mathcal{S}_h} \frac{L_h}{\abs{\mathcal{S}_h}} (\mu_{k_h^*} -  \mu_k)
            \\
             & \overset{(a)}\le \sum_{h=1}^{H'} L_h \cdot 4 \sqrt{\frac{\log(2KH/\delta)}{L_{h-1}/\abs{\mathcal{S}_{h-1}}}} + \frac{2 (\hat C_{h-1} + C)}{L_{h-1}/\abs{\mathcal{S}_{h-1}}}.
            \\
             & = \sum_{h=1}^{H'} \left( 4 \sqrt{\frac{L_h^2\log(2KH/\delta)}{L_{h-1}/\abs{\mathcal{S}_{h-1}}}} + \frac{2 L_h(\hat C_{h-1} + C)}{L_{h-1}/\abs{\mathcal{S}_{h-1}}} \right)
            \\
             & \le \sum_{h=1}^{H'} \left( 4\sqrt{2L_hK\log(2KH/\delta)} + 4 K (\hat C_{h-1} + C) \right)
            \\
             & =  4\sqrt{2K \log (2KH/\delta)}\sum_{h=1}^{H'} \sqrt{L_h} + 4K\sum_{h=1}^{H'} (\hat C_{h-1} +  C)
            \\
             & \overset{(a)}\le  4g\sqrt{2KL_0 \log (2KH/\delta) {T}} + 4\sqrt{T}\log_2 T + 4K C\log_2 T.
        \end{split}
    \end{equation}
    where inequality (a) is due to~\eqref{eq:bound-Delta-k} and~\eqref{eq:bound-Delta-k-h-star}, and
    inequality (b) is because \(\sum_{h=1}^{H'} \sqrt{L_h} \le g \sqrt{T}\) since the summation of a geometric increasing series is upper bounded by \(g\sqrt{L_{H'}}\), for some universal constant \(g>0\), and \(L_{H'} \le T\).

    Next, we bound Part II as follows,
    \begin{equation}
        \label{eq:bound-regret-part-II}
        \begin{split}
            \sum_{h=h'+1}^{H'} \sum_{k\in\mathcal{S}_h} \frac{L_h}{\abs{\mathcal{S}_h}} (\mu_{k^*} - \mu_{k_h^*})
             & \overset{(a)}\le \sum_{h=h'+1}^{H'} \sum_{k\in\mathcal{S}_h} \frac{L_h}{\abs{\mathcal{S}_h}} \frac{2CK}{L_0 2^{h'+1}}
            \\
             & = \sum_{h=h'+1}^{H'}  \frac{2CKL_h}{L_0 2^{h'+1}}
            \\
             & = {2CK} \sum_{h=h'+1}^{H'}  2^{h-(h'+1)}
            \\
             & \overset{(b)}\le KC^2,
        \end{split}
    \end{equation}
    where inequality (a) is due to~\eqref{eq:remaining-best-arm-close-to-optimal-arm}, and
    inequality (b) is because \(H' - h' \le H - h' \le \log_2 C\).

    Hence, substituting~\eqref{eq:bound-regret-part-I} and~\eqref{eq:bound-regret-part-II} into the regret bound, we have the pseudo-regret bound with a probability of at least \(1-\delta\) as follows,
    \[
        \begin{split}
            \bar R_T
             & \le L_0 + 4g\sqrt{2K \log (2KH/\delta) {T}} + 4\sqrt{T}\log_2 T + 4K C\log_2 T + KC^2
            \\
             & = {O}\left( \sqrt{KT\log\left( \frac{K\log T}{\delta} \right)} + \sqrt{T}\log T +  KC\log T + KC^2\right).
        \end{split}
    \]
    With a similar analysis as in the proof in Theorem~\ref{thm:addtive-upper-bound-known-attack}, we can show that the realized regret \(R_T\) also has the same upper bound (in order) with a probability of at least \(1-\delta\).
\end{proof}

\end{document}